\providecommand{\algorithmname}{Algorithm}
\theoremstyle{plain}
\newtheorem{assumption}{\protect\assumptionname}
\theoremstyle{plain}
\newtheorem{thm}{\protect\theoremname}
\theoremstyle{remark}
\newtheorem{rem}{\protect\remarkname}
\theoremstyle{plain}
\theoremstyle{plain}
\newtheorem{lem}{\protect\lemmaname}
\theoremstyle{plain}
\newtheorem{cor}{\protect\corollaryname}
\crefname{thm}{theorem}{theorems}
\crefname{cor}{collorary}{colloraries}
\crefname{assumption}{assumption}{assumptions}
\crefname{lem}{lemma}{lemmas}
\crefname{rem}{remark}{remarks}
\let\ref\Cref
\date{}
\providecommand{\algorithmname}{Algorithm}
\providecommand{\assumptionname}{Assumption}
\providecommand{\corollaryname}{Corollary}
\providecommand{\lemmaname}{Lemma}
\providecommand{\remarkname}{Remark}
\providecommand{\theoremname}{Theorem}
\begin{document}
\global\long\def\T{\top}%
\global\long\def\lz{\bar{\Lambda}_{0}}%
\global\long\def\lh{\hat{\Lambda}}%
\global\long\def\lb{\bar{\Lambda}}%
\global\long\def\bE{\mathbb{E}}%
\global\long\def\bP{\mathbb{P}}%
\global\long\def\lm{\lambda_{\textnormal{max}}}%
\global\long\def\bR{\mathbb{R}}%
\global\long\def\cov{\operatorname{cov}}%
\global\long\def\tr{\operatorname{tr}}%
\global\long\def\eps{\varepsilon}%
\global\long\def\lmi{\lambda_{\textnormal{min}}}%
\global\long\def\cS{\mathcal{S}}%
\global\long\def\cA{\mathcal{A}}%
\global\long\def\cR{\mathcal{R}}%
\global\long\def\bZ{\mathbb{Z}}%
\global\long\def\unif{\operatorname{Unif}}%
\global\long\def\poly{\operatorname{poly}}%
\global\long\def\bs{\bar{s}}%
\global\long\def\tp{\tilde{\phi}}%
\global\long\def\cM{\mathcal{M}}%
\global\long\def\cI{\mathcal{I}}%

\title{Infinite-Horizon Offline Reinforcement Learning with Linear Function
Approximation: Curse of Dimensionality and Algorithm}
\author{Lin Chen\thanks{The Simons Institute for the Theory of Computing, University of California,
Berkeley. E-mail: \protect\href{mailto:lin.chen@berkeley.edu}{lin.chen@berkeley.edu}.} \and Bruno Scherrer\thanks{Université de Lorraine, CNRS, Inria, IECL, F-54000 Nancy, France.
E-mail: \protect\href{mailto:bruno.scherrer@inria.fr}{bruno.scherrer@inria.fr}.} \and Peter L. Bartlett\thanks{University of California, Berkeley. E-mail: \protect\href{mailto:peter@berkeley.edu}{peter@berkeley.edu}.}}
\maketitle
\begin{abstract}
In this paper, we investigate the sample complexity of policy evaluation
in infinite-horizon offline reinforcement learning (also known as
the off-policy evaluation problem) with linear function approximation.
We identify a hard regime $d\gamma^{2}>1$, where $d$ is the dimension
of the feature vector and $\gamma$ is the discount rate. In this
regime, for any $q\in[\gamma^{2},1]$, we can construct a hard instance
such that the smallest eigenvalue of its feature covariance matrix
is $q/d$ and it requires $\Omega\left(\frac{d}{\gamma^{2}\left(q-\gamma^{2}\right)\eps^{2}}\exp\left(\Theta\left(d\gamma^{2}\right)\right)\right)$
samples to approximate the value function up to an additive error
$\eps$. Note that the lower bound of the sample complexity is exponential
in $d$. If $q=\gamma^{2}$, even infinite data cannot suffice. Under
the low distribution shift assumption, we show that there is an algorithm
that needs at most $O\left(\max\left\{ \frac{\left\Vert \theta^{\pi}\right\Vert _{2}^{4}}{\eps^{4}}\log\frac{d}{\delta},\frac{1}{\eps^{2}}\left(d+\log\frac{1}{\delta}\right)\right\} \right)$
samples ($\theta^{\pi}$ is the parameter of the policy in linear
function approximation) and guarantees approximation to the value
function up to an additive error of $\eps$ with probability at least
$1-\delta$. 
\end{abstract}

\section{Introduction\label{sec:Introduction}}

In offline reinforcement learning (also known as batch reinforcement
learning) \citep{levine2020offline,agarwal2020optimistic,fujimoto2019off},
we are interested in evaluating a strategy and making sequential decisions
when the algorithm has access to a batch of offline data (for example,
watching StarCraft game videos and reading click logs of users of
Amazon) rather than interacts directly with the environment, which
is modeled by a Markov decision process (MDP). Research on offline
reinforcement learning has gained increasing interest because of the
following reasons. First, exploration can be expensive and even risky.
For example, while a robot explores the environment, in addition to
the time and economic costs, it can damage its own hardware as well
as objects around and even hurt people. Second, we can use offline
reinforcement learning to pre-train an agent efficiently using existing
data and evaluate the exploitation performance of an algorithm.

To handle large-scale and even continuous states, researchers introduced
function approximation to approximate the value of states and state-action
pairs \citep{sutton1999policy,gordon2001reinforcement,wang2020provably,agarwal2019theory,yang2020provably}.
Linear function approximation assumes that every state-action pair
is assigned a (hand-craft or learned) feature vector and that the
value function is the inner product of the feature vector and an unknown
parameter that depends on the policy \citep{bradtke1996linear,melo2007q,melo2008analysis,wang2021provably,he2020logarithmic}.
\citep{jin2020provably,wang2020statistical} considered online and
offline episodic finite-horizon reinforcement learning with linear
function approximation, respectively. Our work considers infinite-horizon
offline reinforcement learning with linear function approximation.
We investigate the sample complexity of approximating the value function
up to an additive error bound $\eps$ under a given policy (this problem
is also known as the \emph{off-policy evaluation}). Our results consist
of a lower bound and an upper bound. Throughout this paper, let $d$
denote the dimension of the feature vector and $\gamma$ the discount
rate. 

\paragraph{Lower Bound}

Recall that the assumption of linear function approximation means
that the value function is linear in the unknown policy-specific parameter
$\theta^{\pi}$. For the feature vectors of the state-action pairs
in the dataset, we call their covariance matrix the \emph{feature
covariance matrix}. We identify a hard regime $d\gamma^{2}>1$. In
this regime, inspired by \citep{wang2020statistical,amortila2020variant},
we construct a hard instance whose value function satisfies the assumption
of linear function approximation and feature covariance matrix is
well- or even best-conditioned possible. To be precise, for any $q\in[\gamma^{2},1]$,
we can construct a hard instance whose feature covariance matrix has
the smallest eigenvalue $q/d$. To approximate the value of a state
in this instance up to an additive error $\eps$, with high probability
we need $\Omega\left(\frac{d}{\gamma^{2}\left(q-\gamma^{2}\right)\eps^{2}}\exp\left(\Theta\left(d\gamma^{2}\right)\right)\right)$
samples. We see that the sample complexity depends exponentially in
$d$ and suffers from the curse of dimensionality. In fact, $q=1$
represents the best-conditioned feature covariance matrix because
the smallest eigenvalue has a $1/d$ upper bound. If one chooses $q=\gamma^{2}$,
even infinite data cannot guarantee good approximation and we recover
the result of \citep{amortila2020variant}. We would like to remark
that the result of \citep{amortila2020variant} is a special case
of ours. The smallest eigenvalue is $\gamma^{2}/d$ in the construction
of \citep{amortila2020variant}. We can make it $1/d$ in our construction
at a cost of degrading the sample complexity lower bound from infinity
to being exponential in $d$. This agrees with our intuition that
a problem with a better-conditioned feature covariance matrix (which
indicates better feature coverage) is easier to solve. In addition,
our result fills the gap from $\gamma^{2}/d$ to the best possible
condition $1/d$. 

\paragraph{Upper Bound}

Under the low distribution shift assumption, we show that the Least-Squares
Policy Evaluation (LSPE) algorithm needs at most $O\left(\max\left\{ \frac{\left\Vert \theta^{\pi}\right\Vert _{2}^{4}}{\eps^{4}}\log\frac{d}{\delta},\frac{1}{\eps^{2}}\left(d+\log\frac{1}{\delta}\right)\right\} \right)$
samples ($\theta^{\pi}$ is the parameter of the policy in linear
function approximation) and guarantees approximation to the value
function up to an additive error of $\eps$ with probability at least
$1-\delta$. If we also assume $\left\Vert \theta^{\pi}\right\Vert _{2}\le O(\sqrt{d})$
as in \citep{jin2020provably,wang2020statistical}, the sample complexity
becomes $O\left(\frac{d^{2}}{\eps^{4}}\log\frac{d}{\delta}\right)$.
In addition, we show that our hard instance does not satisfy the low
distribution shift assumption and therefore the upper bound does not
contradict the lower bound. 

\paragraph{Paper Organization}

The rest of the paper is organized as follows. \ref{sec:Related-Work}
presents related work. We introduce notation and preliminaries in
\ref{sec:Preliminaries}. We show the lower bound in \ref{sec:Lower-Bound}
and upper bound in \ref{sec:Upper-Bound}. \ref{sec:Conclusion} concludes
the paper. 

\section{Related Work\label{sec:Related-Work}}

There is a large body of work on policy evaluation in offline reinforcement
learning (also known as the off-policy evaluation) with function approximation
\citep{feng2020accountable,yang2020off,yin2020near,lazic2020maximum,chen2019information,duan2020minimax,zanette2020exponential,wang2020statistical,amortila2020variant,uehara2021finite}.
The seminal work \citep{chen2019information} studied offline infinite-horizon
reinforcement learning whose value function is approximated by a finite
function class. Assuming both low distribution shift and policy completeness,
they showed an upper bound on the sample complexity. The upper bound
depends polynomially in $1/\eps$ and $1/(1-\gamma)$ (in their paper,
$\eps$ denotes the multiplicative approximation error bound), logarithmically
in the size of the function class, and linearly in the \foreignlanguage{american}{concentratability}
coefficient that quantifies distribution shift. If low distribution
shift is not assumed, they showed a lower bound that excludes polynomial
sample complexity if the MDP dynamics are unrestricted. \citep{duan2020minimax}
studied episodic finite-horizon off-policy evaluation with linear
function approximation. They assumed that the function class is closed
under the conditional transition operator and that the data consists
of i.i.d. episode samples, each being a trajectory generated by some
policy. Under these two assumptions, they determined the minimax-optimal
error of evaluating a policy. 

The papers closest to ours are probably \citep{zanette2020exponential,wang2020statistical,amortila2020variant}.
\citep{wang2020statistical} studied offline episodic finite-horizon
reinforcement learning with linear function approximation. They proved
an $\Omega\left(\left(d/2\right)^{H}\right)$ sample complexity lower
bound in order to achieve a constant additive approximation error
with high probability, where $H$ is the planning horizon. In their
hard instance, the smallest eigenvalue of the feature covariance matrix
is $1/d$. Under the low distribution shift assumption, they proved
an upper bound on the sample complexity. In particular, they showed
that the squared additive error is at most $\prod_{h=1}^{H}C_{h}\cdot\poly(d,H)/\sqrt{N}$,
where $N$ is the number of samples and $C_{h}$ are positive constants
coming from their low distribution shift assumption. However, this
additional assumption does not exclude their hard instance. It is
possible that $C_{h}=\Theta(d)$ and their upper bound still gives
an upper bound exponential in $H$. \citep{amortila2020variant} considered
the same problem as in this paper. They presented a hard instance
such that the smallest eigenvalue of the feature covariance matrix
is $\gamma^{2}/d$ and any algorithm must have $\Omega(1)$ additive
approximation error, even with infinite data. We have compared our
work to \citep{amortila2020variant} in \ref{sec:Introduction}. \citep{zanette2020exponential}
investigated a different setting where data is obtained via policy-free
queries and policy-induced queries. \citep{zanette2020exponential}
did not consider the condition number of the feature covariance matrix. 

\section{Preliminaries\label{sec:Preliminaries}}

We use the shorthand notation $[N]\triangleq\{1,2,\dots,N\}$. If
$S$ is a set, write $\unif(S)$ for the uniform distribution on $S$.
If $A$ is a matrix, write $\left\Vert A\right\Vert _{2}$ for its
spectral norm, which equals its largest singular value. If $A$ is
a vector, $\left\Vert A\right\Vert _{2}$ agrees with its Euclidean
norm. For two square matrices $A$ and $B$ of the same size, we write
$A\preceq B$ if $B-A$ is a positive semidefinite matrix. 

\paragraph*{Infinite-Horizon Reinforcement Learning}

We consider the infinite-horizon Markov decision process (MDP) \citep{sutton2018reinforcement}.
It is defined by the tuple $(\cS,\cA,P,R,\gamma)$, where $\cS$ is
the set of states, $\cA$ is the set of actions that an agent can
choose and play, $P(\bar{s}\mid s,a)$ and $R(r\mid s,a)$ are probability
distributions on $\cS$ and $\bR$ respectively given a state-action
pair $(s,a)\in\cS\times\cA$, and $\gamma\in(0,1)$ is the discount
factor. We assume that the reward $r\sim R(\cdot\mid s,a)$ takes
values from $[-1,1]$. We will also denote this random variable by
$R(s,a)$, i.e., $R(s,a)\sim R(\cdot\mid s,a)$ (we overload the notation
$R$). A policy $\pi(a\mid s)$ is a probability distribution on $\cA$
given a state $s$. If $\pi$ is deterministic, we will abuse the
notation and write $a=\pi(s)$ if $\pi(\cdot\mid s)$ is a delta distribution
at $a$. Given a policy $\pi$ as well as an initial state $s_{0}$,
it induces a random trajectory $\{(s_{i},a_{i},r_{i})\mid i\ge0\}$,
where $a_{i}\sim\pi(\cdot\mid s_{i})$, $r_{i}\sim R(\cdot\mid s_{i},a_{i})$
and $s_{i+1}\sim P(\cdot\mid s_{i},a_{i})$. The value of a state
$s$ and the $Q$-function of a state-action pair $(s,a)$ are given
by 
\[
V^{\pi}(s)=\bE\left[\sum_{i\ge0}\gamma^{i}r_{i}\mid s_{0}=s\right]\,,\quad Q^{\pi}(s,a)=\bE\left[\sum_{i\ge0}\gamma^{i}r_{i}\mid s_{0}=s,a_{0}=a\right]\,.
\]
 Since we assume that the absolute value of rewards is at most $1$,
we have $\left|V^{\pi}(s)\right|\le\frac{1}{1-\gamma}$ and $\left|Q^{\pi}(s,a)\right|\le\frac{1}{1-\gamma}$.

\paragraph{Linear Function Approximation}

The following \ref{assu:linear} assumes that the $Q$-function is
the inner product of the feature vector $\phi(s,a)$ of a state-action
pair and the unknown policy-specific parameter $\theta^{\pi}$. This
assumption was also assumed in \citep{melo2007q,amortila2020variant}.
Although it was not directly assumed in \citep{jin2020provably},
their linear MDP assumption (Assumption A) implies our \ref{assu:linear}
(see Proposition 2.3 in \citep{jin2020provably}) and they stated
it in the context of episodic finite-horizon reinforcement learning. 
\begin{assumption}[\citep{melo2007q,jin2020provably}]
\label{assu:linear}For every state-action pair $(s,a)$ and every
policy $\pi$, there is a feature vector $\phi(s,a)\in\bR^{d}$ and
a parameter $\theta^{\pi}\in\bR^{d}$ such that 
\[
Q^{\pi}(s,a)=\phi(s,a)^{\T}\theta^{\pi}\,.
\]
\end{assumption}
\begin{assumption}[\citep{wang2020statistical}]
\label{assu:norm1}Since $\left|Q^{\pi}(s,a)\right|=\left|\phi(s,a)^{\T}\theta^{\pi}\right|\le\frac{1}{1-\gamma}$,
without loss of generality, we assume $\left\Vert \phi(s,a)\right\Vert _{2}\le1$
for every $(s,a)\in\cS\times\cA$. 
\end{assumption}
In fact, if $\max_{s,a}\phi(s,a)>1$, we can use the normalized feature
vectors $\frac{\phi(s,a)}{\max_{s',a'}\phi(s',a')}$ and the new parameter
for the policy $\pi$ becomes $\max_{s',a'}\phi(s',a')\theta^{\pi}$,
where $\theta^{\pi}$ is the original policy parameter. 

\paragraph{Offline Reinforcement Learning}

In offline reinforcement learning, rather than interacts with the
MDP directly, the agent has access to a batch of samples $\{(s_{i},a_{i},r_{i},\bar{s}_{i})\mid i\in[N]\}$,
where $(s_{i},a_{i})$ are i.i.d. samples from a distribution $\mu$
on $\cS\times\cA$, $r_{i}\sim R(\cdot\mid s_{i},a_{i})$, and $\bar{s}_{i}\sim P(\cdot\mid s_{i},a_{i})$.
Given a policy $\pi$, we are interested in evaluating the value $V^{\pi}(s)$
of a state under this policy approximately, using samples from $\mu$.
If our problem satisfies \ref{assu:linear}, the \emph{feature covariance
matrix} of $\mu$ \citep{wang2020statistical,amortila2020variant}
is defined by 
\[
\Lambda\triangleq\bE_{(s,a)\sim\mu}\left[\phi(s,a)\phi(s,a)^{\T}\right]\,.
\]
We require that the feature covariance matrix be well-conditioned
(the smallest eigenvalue of $\Lambda$ is lower bounded), which indicates
that $\mu$ has a good feature coverage. In our hard instance to be
presented in \ref{sec:Lower-Bound}, the smallest eigenvalue satisfies
$\lmi(\Lambda)=q/d$, where $q$ can be any value on $[\gamma^{2},1]$.
Note that under \ref{assu:norm1}, $\lmi(\Lambda)$ is at most $1/d$.
To see this, we compute the trace $\tr(\Lambda)=\bE_{(s,a)\sim\mu}\left[\tr\left(\phi(s,a)\phi(s,a)^{\T}\right)\right]=\bE_{(s,a)\sim\mu}\left[\tr\left(\phi(s,a)^{\T}\phi(s,a)\right)\right]\le1$.
Since $\tr(\Lambda)\ge d\lmi(\Lambda)$, we get $\lmi(\Lambda)\le1/d$.
In other words, $\lmi(\Lambda)=1/d$ is the best possible condition. 

\section{Lower Bound\label{sec:Lower-Bound}}

In this section, we present our lower bound on the sample complexity
of infinite-horizon offline reinforcement learning with linear function
approximation. Recall that $d$ is the dimension in linear function
approximation and $\gamma$ is the discount rate. Inspired by \citep{amortila2020variant,wang2020statistical},
we can construct a hard instance provided that $d\gamma^{2}>1$. In
the assumption of our lower bound theorem below (\ref{thm:lower-bound}),
we require that $d$ be a multiple of $\left\lceil b/\gamma^{2}\right\rceil $
for some constant $b>1$. If $d\gamma^{2}>1$, there exists $b>1$
such that $d\ge b/\gamma^{2}$. Then \ref{thm:lower-bound} gives
an at least exponential, and potentially infinite, lower bound of
sample complexity, depending on the condition number (the smallest
eigenvalue of the feature covariance matrix, i.e., $\lmi(\Lambda)$)
that we would like to achieve. In other words, we suffer from the
curse of dimensionality. Therefore, we can say that the regime where
$d\gamma^{2}>1$ is a hard regime. 
\begin{thm}
\label{thm:lower-bound}Let $\cI_{d}$ denote the set of all infinite-horizon
MDPs that satisfy \ref{assu:linear} and \ref{assu:norm1} and whose
feature vectors have dimension $d$, rewards lie in $[-1,1]$. Let
$\cM_{\lambda}(\cS,\cA)$ denote the set of all probability measures
on $\cS\times\cA$ such that the feature covariance matrix $\Lambda$
has smallest eigenvalue at least $\lambda$. Fix $b>1$ and $q\in[\gamma^{2},1]$.
For any dimension $d$ which is a multiple of $\left\lceil b/\gamma^{2}\right\rceil $
(thus $d_{b,\gamma}\triangleq\frac{d}{\left\lceil b/\gamma^{2}\right\rceil }$
is a positive integer), if $0<\delta<1/4$, we have
\begin{equation}
\sup_{\substack{(\cS,\cA,P,R,\gamma)\in\cI_{d}\\
s\in\cS,\mu\in\cM_{q/d}(\cS,\cA)
}
}\inf_{\hat{V},\pi}\left|\hat{V}-V^{\pi}(s)\right|\ge\Omega\left(\sqrt{\frac{1+\gamma}{(q-\gamma^{2})(1-\gamma)\gamma^{2}N}d_{b,\gamma}b^{d_{b,\gamma}}\ln\left(\frac{1}{8\delta(1-2\delta)}\right)}\right)\,.\label{eq:sup-inf}
\end{equation}
with probability at least $\delta$, where $N$ is the number of samples
from $\mu$ and $\hat{V}$ is a real-valued function with $N$ samples
as input. 
\end{thm}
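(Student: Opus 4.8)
The plan is to reduce to hypothesis testing, in the spirit of \citep{wang2020statistical,amortila2020variant}. I would build a finite family of MDPs $\{M_v:v\in\mathcal V\}\subseteq\cI_d$, all sharing one feature map $\phi$, one sampling law $\mu\in\cM_{q/d}(\cS,\cA)$, and one query state $s$, designed so that: (i) the law of one offline transition $(s_i,a_i,r_i,\bar s_i)$ depends on $v$ only through $\pm\eta$ shifts of rewards at state--action pairs of $\mu$-mass $\Theta(1/d)$; and (ii) the target values $V^{\pi_v}(s)$ are spread by $2\Delta$, with the ratio $\Delta/\eta$ \emph{exponential} in $m\triangleq d_{b,\gamma}$. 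Granting such a family, tensorization gives $\mathrm{KL}\!\big(\bP_v^{\otimes N}\,\|\,\bP_{v'}^{\otimes N}\big)=O(N\eta^{2}/d)$, and a standard Le Cam / Bretagnolle--Huber testing argument (in an Assouad-type version over the $m$ blocks of the construction, which is also where the linear $d_{b,\gamma}$ factor comes from) shows that once $O(N\eta^{2}/d)\le\ln\!\frac{1}{8\delta(1-2\delta)}$ no $\hat V$ can separate the instances, so $\sup_v\bP_v\!\big(|\hat V-V^{\pi_v}(s)|\ge\Delta\big)\ge\delta$. Choosing $\eta^{2}$ to saturate the testing budget and substituting the amplification relation between $\Delta$ and $\eta$ turns the resulting inequality into \eqref{eq:sup-inf}, the prefactor $\tfrac{1+\gamma}{(1-\gamma)(q-\gamma^{2})\gamma^{2}}$ being what falls out of the construction's constants.

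The construction carries all the content, since it is what governs how well-conditioned $\mu$ can be made. I would partition the $d$ coordinates into $m$ consecutive blocks of size $k\triangleq\lceil b/\gamma^{2}\rceil$ (possible because $d$ is a positive multiple of $k$, so in particular $d\gamma^{2}>1$ and $m=d_{b,\gamma}\ge1$), and use an MDP whose reachable part is a chain of $m$ \emph{stages}, the $j$-th stage a small gadget supported on the coordinates of block $j$, closed off by an absorbing loop so the discounted return is well defined. Each stage must be engineered so that, restricted to its block, $\bE_\mu[\phi\phi^{\T}]=\tfrac{q}{d}I_k$ and every $\|\phi(s,a)\|_2\le1$ — making $\lmi(\Lambda)=q/d$ globally, which is the best attainable under \ref{assu:norm1} by the trace bound in \ref{sec:Preliminaries} — while simultaneously transmitting a value perturbation $\delta$ at the stage's output to one of order $\sqrt{b}\,\delta$ at its input. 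This per-stage gain is the largest ``resonant'' amplification compatible with $\|\phi\|_2\le1$, discount $\gamma$, and block eigenvalue $q/d$; it diverges precisely at $q=\gamma^{2}$, which is how the bound recovers \citep{amortila2020variant} and degrades continuously to $\exp(\Theta(d\gamma^{2}))$ as $q\uparrow1$. Realizability (\ref{assu:linear}) I would verify by solving the block-triangular Bellman system for $\theta^{\pi_v}$ in closed form; composing the $m$ stages then shows the far-end reward shift $\eta$ moves $V^{\pi_v}(s)$ by $\Theta(\eta\,b^{m/2})$ up to the $\gamma$-factors, and one checks that for $\eta$ small enough every $M_v$ is a bona fide MDP (kernels in $[0,1]$, rewards in $[-1,1]$, $|V^{\pi_v}|\le\tfrac1{1-\gamma}$) with $\mu\in\cM_{q/d}$.

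The remaining step, the divergence estimate, is routine: only reward laws change, and only by $\pm\eta$ at pairs of $\mu$-mass $\Theta(1/d)$, so a bounded/sub-Gaussian computation gives per-sample KL $O(\eta^{2}/d)$, hence the $O(N\eta^{2}/d)$ used above; inserting $\eta^{2}\asymp\tfrac{d}{N}\ln\!\frac1{8\delta(1-2\delta)}$ into $\Delta=\Theta(\eta\,b^{m/2})$, together with the explicit $\gamma$- and $q$-dependent factors and $m=d_{b,\gamma}$, reproduces \eqref{eq:sup-inf}. I expect the main obstacle to be the stage gadget itself: holding the block covariance at exactly $\tfrac{q}{d}I_k$, keeping all feature norms at most $1$, and still extracting gain $\sqrt{b}$ from every stage are three competing demands, and reconciling them — picking the right feature vectors and transition kernels, and showing the gain cannot be improved, so that the dependence on the condition number is sharp — is the crux. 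As a final consistency check with \ref{sec:Upper-Bound}, one confirms that this $\mu$ violates the low-distribution-shift condition, so the exponential lower bound does not contradict the polynomial upper bound.
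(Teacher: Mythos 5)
Your overall architecture is the same as the paper's: a chain of $d_{b,\gamma}$ stages, each occupying a block of $\lceil b/\gamma^2\rceil$ coordinates, with feature covariance exactly $\frac{q}{d}I_d$, a reward perturbation that is only visible through a tiny bias at low-$\mu$-mass states, per-stage value amplification of order $\sqrt{b}$, and a two-hypothesis testing argument at the end. (The paper uses a single two-point test via a coin-tossing lemma rather than KL/Assouad, and its linear $d_{b,\gamma}$ factor comes from the fact that only a $p/d_{b,\gamma}$ fraction of samples hit the informative states, not from an Assouad sum over blocks; these are cosmetic differences.) However, two points in your plan are genuine problems rather than deferred routine work.

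First, you explicitly leave the stage gadget unconstructed and call it ``the crux,'' and that is indeed where all the content lives; a plan that stops there has not proved the theorem. For the record, the paper's gadget is: within level $l$, $m=\lceil b/\gamma^2\rceil$ states $s_{l,i}$ with features $e_{l,i}$ and values $r_0(\sqrt m\gamma)^l$, plus one ``collector'' state $s_{l,0}$ with feature $\frac{1}{\sqrt m}\sum_i e_{l,i}$ and value $r_0(\sqrt m\gamma)^l\sqrt m$ (a factor $\sqrt m$ larger because its feature averages the block); all level-$(l{+}1)$ states transition to $s_{l,0}$, which yields the per-level gain $\sqrt m\,\gamma\ge\sqrt b$ while every feature stays unit-norm. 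Second, your account of where the $q$-dependence enters is wrong in a way that matters: you claim the per-stage amplification itself ``diverges precisely at $q=\gamma^2$.'' If that were the mechanism, the lower bound would scale like $(q-\gamma^2)^{-d_{b,\gamma}}$, not like the single factor $(q-\gamma^2)^{-1}$ in \eqref{eq:sup-inf}. In the paper the amplification is $q$-independent; $q$ is tuned entirely through the sampling distribution $\mu=p\unif(G_B)+(1-p)\unif(G_A)$ with $p=\frac{q-\gamma^2}{1-\gamma^2}$, where group $A$ duplicates group $B$ with features scaled by $\gamma$ (so it contributes $\gamma^2/d$ to the covariance) and carries no reward information. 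As $q\downarrow\gamma^2$ the mass $p$ on the informative group vanishes, which is what sends the sample complexity to infinity and recovers \citep{amortila2020variant}. You need this (or an equivalent) interpolation device to cover all $q\in[\gamma^2,1]$ with the stated $(q-\gamma^2)^{-1}$ rate; your blocks-with-$q$-dependent-gain picture does not deliver it.
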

\begin{rem}[Sample complexity]
In the proof of \ref{thm:lower-bound}, we present a hard instance
with only one action such that the smallest eigenvalue of the feature
covariance matrix $\bE_{s\sim\mu}\left[\phi(s)\phi(s)^{\T}\right]$
is $\frac{q}{d}$. For this instance, any algorithm requires 
\[
\Omega\left(\frac{1+\gamma}{(q-\gamma^{2})(1-\gamma)\gamma^{2}\eps^{2}}d_{b,\gamma}b^{d_{b,\gamma}}\ln\left(\frac{1}{8\delta(1-2\delta)}\right)\right)
\]
 samples in order to approximate the value of a state up to an additive
error $\eps$ with probability at least $1-\delta$. This lower bound
for sample complexity follows directly from \ref{eq:sup-inf}. 
\end{rem}
\begin{rem}
Our result subsumes \citep{amortila2020variant} as a special case.
Recall that the smallest eigenvalue of the feature covariance matrix
is at most $1/d$. Therefore, the parameter $q$ is at most $1$.
If $q=\gamma^{2}$, no algorithm can approximate the value of a state
up to a constant additive error even provided with an arbitrarily
large dataset. In this case, the smallest eigenvalue of the feature
covariance matrix is $\gamma^{2}/d$. We recover the impossibility
result of \citep{amortila2020variant}. 
\end{rem}
\begin{rem}
If $q=1$ and $0<\delta<1/4$, we have
\[
\sup_{\substack{(\cS,\cA,P,R,\gamma)\in\cI_{d}\\
s\in\cS,\mu\in\cM_{q/d}(\cS,\cA)
}
}\inf_{\hat{V},\pi}\left|\hat{V}-V^{\pi}(s)\right|\ge\Omega\left(\frac{1}{\gamma(1-\gamma)}\sqrt{\frac{1}{N}d_{b,\gamma}b^{d_{b,\gamma}}\ln\left(\frac{1}{8\delta(1-2\delta)}\right)}\right)\,.
\]
 The sample complexity lower bound becomes 
\[
\Omega\left(\frac{1}{\gamma^{2}(1-\gamma)^{2}\eps^{2}}d_{b,\gamma}b^{d_{b,\gamma}}\ln\left(\frac{1}{8\delta(1-2\delta)}\right)\right)\,.
\]
\end{rem}

\begin{proof}
Fix integers $m\ge1/\gamma^{2}$ and $L\ge1$. We will set $r_{0}$
to either $0$ or $\frac{2\eps}{\gamma^{L-1}m^{L/2}}$. Our hard instance
has three groups of states. Each state has one single action. Therefore,
we omit the action in $R(s,a)$ and $Q(s,a)$ and write $R(s)$ and
$Q(s)$, respectively (in this case, $Q(s)=V(s)$ is the value of
state $s$). All transitions are deterministic. Group A contains $mL$
states $G_{A}\triangleq\{s'_{l,i}\mid l\in[0,L-1]\cap\bZ,m\in[m]\}$.
Group B contains $mL$ states $G_{B}\triangleq\{s{}_{l,i}\mid l\in[0,L-1]\cap\bZ,i\in[m]\}$.
Group C contains $L$ states $G_{C}\triangleq\{s{}_{l,0}\mid l\in[0,L-1]\cap\bZ\}$.
The total number of states in all three groups is $(2m+1)L$. Every
state $s'_{l,i}$ in group A transitions to the corresponding state
$s_{l,i}$ in group B. All states $s_{l,i}$ in group B and C on level
$l\ge1$ transition to state $s_{l-1,0}$. All states $s_{0,i}$ in
group B and C on level $0$ have a self-loop and transition to themselves.
All states in group A have zero reward. All states in group B on level
$l>0$ have zero reward and those in group C on level $l>0$ have
reward $R(s_{l,0})=r_{0}(\sqrt{m}\gamma)^{l}(\sqrt{m}-1)$. Moreover,
define the reward of the state in group C on level $0$ to be $R(s_{0,0})=r_{0}\sqrt{m}(1-\gamma)$.
The reward of the states in group B on level $0$ is a random variable
taking values from $\{-1,1\}$:
\[
R(s_{0,i})=\begin{cases}
1 & \text{with probability \ensuremath{\frac{1+r_{0}(1-\gamma)}{2}\,,}}\\
-1 & \text{with probability \ensuremath{\frac{1-r_{0}(1-\gamma)}{2}}\,.}
\end{cases}
\]
We illustrate our hard instance in \ref{fig:Lower-bound}. We set
the distribution $\mu$ to the mixture of uniform distributions on
$A$ and $B$, i.e., $\mu=p\unif(G_{B})+(1-p)\unif(G_{A})$, where
$p\triangleq\frac{q-\gamma^{2}}{1-\gamma^{2}}\in[0,1]$. 
\begin{figure*}[p]
\includegraphics[width=1\columnwidth]{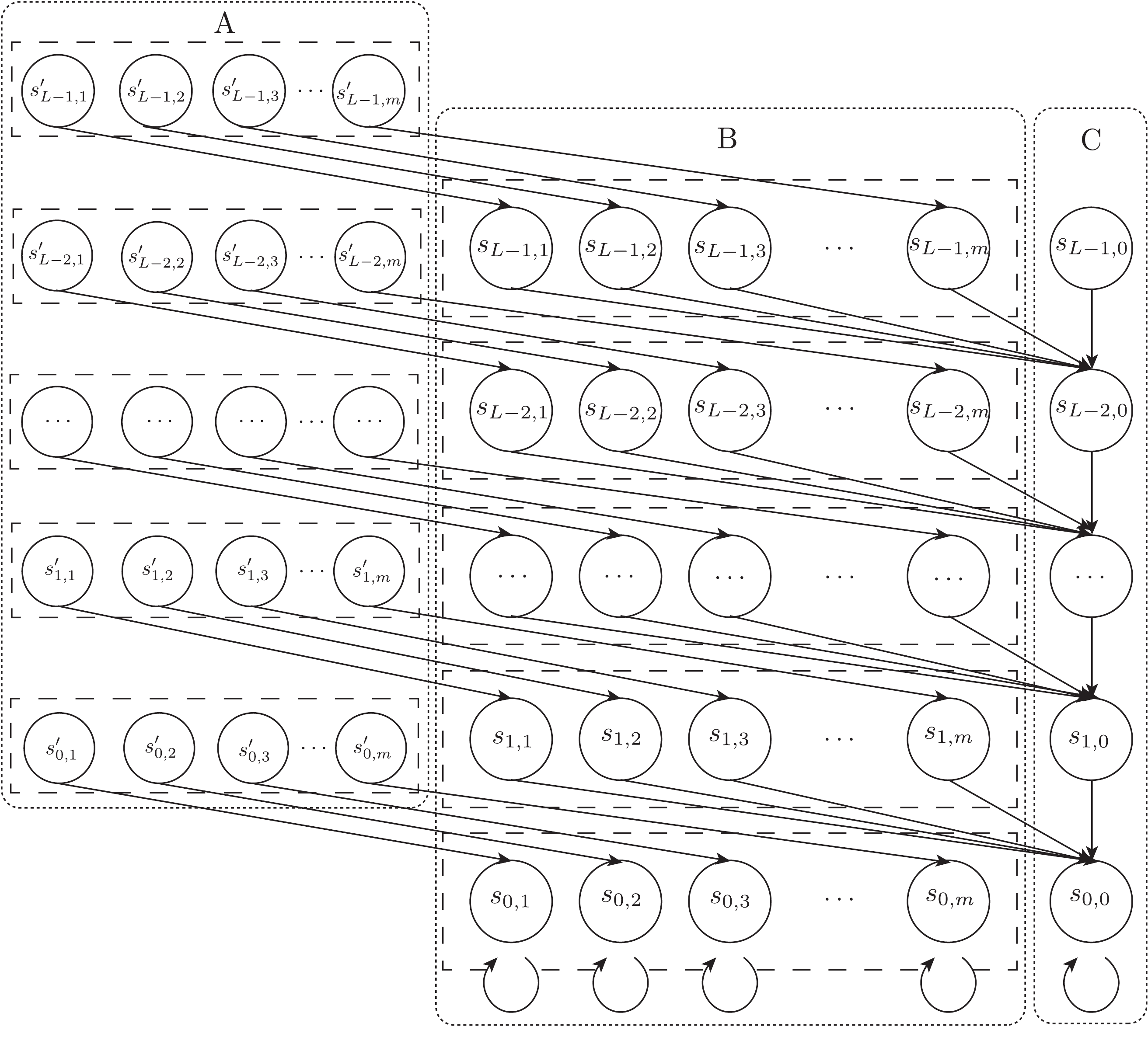}\caption{There are three groups of states in the hard instance. Each state
has one single action. Therefore, we omit the action in $R(s,a)$
and $Q(s,a)$ and write $R(s)$ and $Q(s)$, respectively (in this
case, $Q(s)=V(s)$ is the value of state $s$). All transitions are
deterministic and denoted by arrows in the figure. All states in group
A have zero reward. In group B, the states on level $l>0$ have zero
reward. The states in group B on level $0$ satisfy $R(s_{0,i})=1$
with probability $\frac{1+r_{0}(1-\gamma)}{2}$ and $R(s_{0,i})=-1$
with probability $\frac{1-r_{0}(1-\gamma)}{2}$. As a result, they
have an expected reward of $\protect\bE[R(s_{0,i})]=r_{0}(1-\gamma)$.
The value of a state $s_{l,i}$ in group B is $Q(s_{l,i})=r_{0}(\sqrt{m}\gamma)^{l}$.
In group C, states on level $l>0$ have reward $R(s_{l,0})=r_{0}(\sqrt{m}\gamma)^{l}(\sqrt{m}-1)$
and the states on level $0$ have reward $R(s_{0,0})=r_{0}\sqrt{m}(1-\gamma)$.
Thus the value of state $s_{l,0}$ is $Q(s_{l,0})=r_{0}(\sqrt{m}\gamma)^{l}\sqrt{m}$.
\label{fig:Lower-bound}}
\end{figure*}

First, we check that all rewards lie in $[-1,1]$. Recall that all
states in group A have zero reward. In group B, the reward of $s_{l,i}$
($l>0$, $i\in[m]$) is zero and the reward of $s_{0,i}$ ($i\in[m]$)
is either $-1$ or $1$. In group C, if $r_{0}=0$, the reward of
$s_{l,0}$ ($l\ge0$) is zero. If $r_{0}=\frac{2\eps}{\gamma^{L-1}m^{L/2}}$,
recalling $\eps\le1/2$, we have 
\[
R(s_{0,0})=r_{0}\sqrt{m}(1-\gamma)\le\frac{\sqrt{m}(1-\gamma)}{\gamma^{L-1}m^{L/2}}=\frac{1-\gamma}{(m\gamma^{2})^{(L-1)/2}}\le1
\]
and 
\[
R(s_{l,0})=r_{0}(\sqrt{m}\gamma)^{l}(\sqrt{m}-1)\le R(s_{L-1,0})\le\frac{(\sqrt{m}\gamma)^{L-1}}{\gamma^{L-1}m^{L/2}}(\sqrt{m}-1)\le1\,.
\]
The second step is to compute the value of each state. We will show
$Q(s_{l,0})=r_{0}(\sqrt{m}\gamma)^{l}\sqrt{m}$ for $l\ge0$ by induction.
It holds for $l=0$ because $Q(s_{0,0})=\frac{R(s_{0,0})}{1-\gamma}=r_{0}\sqrt{m}$.
Assume that it holds for some $l\ge0$. We have 
\[
Q(s_{l+1,0})=R(s_{l+1,0})+\gamma Q(s_{l,0})=r_{0}(\sqrt{m}\gamma)^{l+1}(\sqrt{m}-1)+\gamma r_{0}(\sqrt{m}\gamma)^{l}\sqrt{m}=r_{0}(\sqrt{m}\gamma)^{l+1}\sqrt{m}\,.
\]
Then for $i\in[m]$ and $l>0$, we have 
\[
Q(s_{l,i})=\gamma Q(s_{l-1,0})=\gamma r_{0}(\sqrt{m}\gamma)^{l-1}\sqrt{m}=r_{0}(\sqrt{m}\gamma)^{l}\,.
\]
Finally, for $i\in[m]$, we obtain $Q(s_{0,i})=\frac{r_{0}(1-\gamma)}{1-\gamma}=r_{0}$.
In group A, we have $Q(s'_{l,i})=\gamma Q(s_{l,i})=\gamma r_{0}(\sqrt{m}\gamma)^{l}$. 

Let $\{e_{l,i}\mid i\in[m],0\le l\le L-1\}$ be the standard basis
vectors of $\bR^{d}$, where $d=mL$. Recall $Q(s_{l,i})=r_{0}(\sqrt{m}\gamma)^{l}$
for $i\in[m]$ and $Q(s_{l,0})=r_{0}(\sqrt{m}\gamma)^{l}\sqrt{m}$.
Define $\phi(s_{l,i})=e_{l,i}$ and $\phi(s'_{l,i})=\gamma e_{l,i}$
for $i\in[m]$, $\phi(s_{l,0})=\frac{1}{\sqrt{m}}\sum_{i\in[m]}e_{l,i}$,
and 
\[
\theta^{\pi}=\sum_{i\in[m]}\sum_{l=0}^{L-1}r_{0}(\sqrt{m}\gamma)^{l}e_{l,i}\,.
\]

For $i\in[m]$, we have
\begin{align*}
\phi(s_{l,i})^{\T}\theta^{\pi} & =r_{0}(\sqrt{m}\gamma)^{l}=Q(s_{l,i})\,,\\
\phi(s'_{l,i})^{\T}\theta^{\pi} & =\gamma r_{0}(\sqrt{m}\gamma)^{l}=Q(s'_{l,i})\,,\\
\phi(s_{l,0})^{\T}\theta^{\pi} & =\frac{1}{\sqrt{m}}\sum_{i\in[m]}r_{0}(\sqrt{m}\gamma)^{l}=r_{0}(\sqrt{m}\gamma)^{l}\sqrt{m}=Q(s_{l,0})\,.
\end{align*}
The feature vectors of the states in group B and C have unit norm:
$\left\Vert \phi_{l,i}\right\Vert _{2}=1$ for $i\in[m]$ and $\left\Vert \phi_{l,0}\right\Vert _{2}=\frac{1}{\sqrt{m}}\cdot\sqrt{m}=1$.
Those in group A have norm $\left\Vert \phi(s'_{l,i})\right\Vert _{2}=\gamma<1$.
We are in a position to compute the feature covariance matrix
\[
\bE_{s\sim\mu}\left[\phi(s)\phi(s)^{\T}\right]=p\left(\frac{1}{d}\sum_{i\in[m]}\sum_{l=0}^{L-1}e_{l,i}e_{l,i}^{\T}\right)+(1-p)\left(\frac{\gamma^{2}}{d}\sum_{i\in[m]}\sum_{l=0}^{L-1}e_{l,i}e_{l,i}^{\T}\right)=\frac{q}{d}I_{d}\,,
\]
where $I_{d}$ is the $d\times d$ identity matrix. 

Set $m=\left\lceil \frac{b}{\gamma^{2}}\right\rceil $ and $L=d/m=d_{b,\gamma}$.
In this case, our requirement $m\ge1/\gamma^{2}$ is satisfied. Next,
we consider an algorithm evaluating the value of $s_{L-1,0}$. If
$r_{0}=0$, $Q(s_{L-1,0})=0$. If $r_{0}=\frac{2\eps}{\gamma^{L-1}m^{L/2}}$,
$Q(s_{L-1,0})=2\eps$. To approximate the value of $s_{L-1,0}$ up
to an additive error of $\eps$, the algorithm has to distinguish
$r_{0}=0$ and $r_{0}=\frac{2\eps}{\gamma^{L-1}m^{L/2}}$. The only
way that the algorithm obtains the information of $r_{0}$ is to sample
the reward of $s_{0,i}$ ($i\in[m]$) because the other states in
the support of $\mu$ have reward $0$. Recall the two possible reward
distributions of $s_{0,i}$ ($i\in[m]$):
\[
R(s_{0,i})=\begin{cases}
1 & \text{with probability \ensuremath{\frac{1}{2}}}\\
-1 & \text{with probability \ensuremath{\frac{1}{2}}}
\end{cases}\,,\quad R(s_{0,i})=\begin{cases}
1 & \text{with probability \ensuremath{\frac{1}{2}\left(1+\frac{2\eps\left(1-\gamma\right)}{\gamma^{L-1}m^{L/2}}\right)}}\\
-1 & \text{with probability \ensuremath{\frac{1}{2}\left(1-\frac{2\eps\left(1-\gamma\right)}{\gamma^{L-1}m^{L/2}}\right)}}
\end{cases}\,.
\]
Using Lemma 5.1 in \citep{anthony2009neural}, we have any algorithm
outputs an incorrect $Q(s_{L-1,0})$ from the two choices $0$ and
$2\eps$ with probability at least 
\[
\frac{1}{4}\left(1-\sqrt{1-\exp\left(-\Theta\left(N_{0}\left(\frac{2\eps\left(1-\gamma\right)}{\gamma^{L-1}m^{L/2}}\right)^{2}\right)\right)}\right)\,,
\]
where $N_{0}$ is the number of samples of $s_{0,i}$ ($i\in[M]$).
Since only $p/L$ of samples from $\mu$ are $s_{0,i}$, any algorithm
outputs an incorrect $Q(s_{L-1,0})$ with probability at least 
\begin{align*}
 & \frac{1}{4}\left(1-\sqrt{1-\exp\left(-\Theta\left(\frac{p}{L}N\left(\frac{2\eps\left(1-\gamma\right)}{\gamma^{L-1}m^{L/2}}\right)^{2}\right)\right)}\right)\\
= & \frac{1}{4}\left(1-\sqrt{1-\exp\left(-\Theta\left(\frac{pN\eps^{2}\left(1-\gamma\right)^{2}\gamma^{2}}{L(m\gamma^{2})^{L}}\right)\right)}\right)\\
\ge & \frac{1}{4}\left(1-\sqrt{1-\exp\left(-\Theta\left(\frac{pN\eps^{2}\left(1-\gamma\right)^{2}\gamma^{2}}{d_{b,\gamma}b^{d_{b,\gamma}}}\right)\right)}\right)\\
= & \frac{1}{4}\left(1-\sqrt{1-\exp\left(-\Theta\left(\frac{(q-\gamma^{2})(1-\gamma)\gamma^{2}}{1+\gamma}\cdot\frac{N\eps^{2}}{d_{b,\gamma}b^{d_{b,\gamma}}}\right)\right)}\right)\,,
\end{align*}
where $N$ is the number of samples from $\mu$ and the inequality
follows from $m\gamma^{2}\ge b$ and $L=d_{b,\gamma}$. If $\delta=\frac{1}{4}\left(1-\sqrt{1-\exp\left(-\Theta\left(\frac{(q-\gamma^{2})(1-\gamma)\gamma^{2}}{1+\gamma}\cdot\frac{N\eps^{2}}{d_{b,\gamma}b^{d_{b,\gamma}}}\right)\right)}\right)$
and $0<\delta<1/4$, we can solve $\eps$ and obtain
\[
\eps=\Theta\left(\sqrt{\frac{1+\gamma}{(q-\gamma^{2})(1-\gamma)\gamma^{2}N}d_{b,\gamma}b^{d_{b,\gamma}}\ln\left(\frac{1}{8\delta(1-2\delta)}\right)}\right)\,.
\]

\end{proof}

\section{Upper Bound\label{sec:Upper-Bound}}

In this section, we show that under the low distribution shift assumption,
the Least-Squares Policy Evaluation approximates the value function
up to any given additive error bound $\eps$ with $O\left(\max\left\{ \frac{\left\Vert \theta^{\pi}\right\Vert _{2}^{4}}{\eps^{4}}\log\frac{d}{\delta},\frac{d}{\eps^{2}}\right\} \right)$
samples. Suppose that the samples that the agent has access to are
$\{(s_{i},a_{i},r_{i},\bar{s}_{i})\mid i\in[N]\}$, where $(s_{i},a_{i})\sim\mu$,
$r_{i}\sim R(\cdot\mid s_{i},a_{i})$ and $\bar{s}_{i}\sim P(\cdot\mid s_{i},a_{i})$.
We would like to approximate the value of state $s_{0}$. Recall the
feature covariance matrix $\Lambda\triangleq\bE_{(s,a)\sim\mu}\left[\phi(s,a)\phi(s,a)^{\T}\right]$.
Define $\tp(s)\triangleq\bE_{a\sim\pi(\cdot\mid s)}\phi(s,a)$, $\bar{\Lambda}_{0}\triangleq\tp(s_{0})\tp(s_{0})^{\T}$,
and 
\[
\lb\triangleq\bE_{(s,a)\sim\mu,\bar{s}\sim P(\cdot|s,a)}\left[\tp(\bar{s})\tp(\bar{s})^{\T}\right]=\bE_{(s,a)\sim\mu,\bar{s}\sim P(\cdot|s,a),\bar{a}\sim\pi(\cdot\mid\bs)}\left[\phi(\bar{s},\bar{a})\phi(\bar{s},\bar{a})^{\T}\right]\,.
\]

\begin{assumption}[Low distribution shift]
\label{assu:psd} There exists $C\in(0,1/\gamma^{2})$ and $C_{0}>0$
such that $\lb\preceq C\Lambda$ and $\lz\preceq C_{0}\Lambda$.
\end{assumption}
\begin{rem}
\label{rem:-rules-out}\ref{assu:psd} rules out the hard instance
in \ref{thm:lower-bound}. Specifically, there is no $C\in(0,1/\gamma^{2})$
such that $\lb\preceq C\Lambda$ in the hard instance.
\end{rem}
\begin{proof}[Proof of \ref{rem:-rules-out}]
In the proof of \ref{thm:lower-bound}, we show that $\Lambda=\frac{q}{d}I_{d}$.
In the sequel, we compute the matrix $\lb$. Recall the data distribution
$\mu=p\unif(G_{B})+(1-p)\unif(G_{A})$, where $p\triangleq\frac{q-\gamma^{2}}{1-\gamma^{2}}\in[0,1]$,
$G_{A}\triangleq\{s'_{l,i}\mid l\in[0,L-1]\cap\bZ,m\in[m]\}$ and
$G_{B}\triangleq\{s{}_{l,i}\mid l\in[0,L-1]\cap\bZ,i\in[m]\}$. Suppose
that $\bs\sim P(\cdot\mid s)$ is the next state for $s$. Every state
in group A transitions to the corresponding state in group B, i.e.,
$\bP\left(\bar{s}=s'_{l,i}\mid s=s_{l,i}\right)=1$. Therefore, if
$s\sim\unif(G_{A}),$we have $\bar{s}\sim\unif(G_{B})$ and 
\begin{equation}
\bE_{s\sim\unif(G_{A}),\bar{s}\sim P(\cdot\mid s)}\left[\phi(\bs)\phi(\bs)^{\T}\right]=\frac{1}{mL}\sum_{i\in[m]}\sum_{l=0}^{L-1}e_{l,i}e_{l,i}^{\T}=\frac{1}{d}I_{d}\triangleq\lb_{A}\,.\label{eq:lambda-bar-A}
\end{equation}
Every state in group B on level $l>0$ transition to state $s_{l-1,0}$
in group C. All states in group B on level $0$ have a self-loop.
As a result, if $s\sim\unif(G_{B})$, we have $\bP\left(\bar{s}=s_{0,i}\right)=\frac{1}{mL}$
(for all $i\in[m]$) and $\bP\left(\bar{s}=s_{l,0}\right)=\frac{1}{L}$
(for all $l\in[0,L-2]\cap\mathbb{Z}$). Therefore, we deduce 
\begin{equation}
\bE_{s\sim\unif(G_{B}),\bar{s}\sim P(\cdot\mid s)}\left[\phi(\bs)\phi(\bs)^{\T}\right]=\frac{1}{mL}\sum_{i\in[m]}e_{0,i}e_{0,i}^{\T}+\frac{1}{L}\sum_{l=0}^{L-2}\left(\frac{1}{\sqrt{m}}\sum_{i\in[m]}e_{l,i}\right)\left(\frac{1}{\sqrt{m}}\sum_{i\in[m]}e_{l,i}\right)^{\T}\,.\label{eq:expected-matrix}
\end{equation}
Let $\lb_{B}\in\bR^{d\times d}$ denote the matrix in \ref{eq:expected-matrix}.
For an index in $[d]$, we denote it by two indices $(l,i)\in\left([0,L-1]\cap\bZ\right)\times[m]$.
Then $M_{(0,i),(0,j)}=\frac{1}{d}\left(1+\delta_{ij}\right)$ for
$i\in[m]$ ($\delta$ is the Kronecker delta such that $\delta_{ij}=1$
if $i=j$ and it is zero otherwise) and $[\lb_{B}]{}_{(l,i),(l,j)}=\frac{1}{d}$
for $i\in[m]$ and $l\in[L-2]$. We see that $\lb_{B}$ is a block
diagonal matrix. The matrix $\frac{1}{d}\left(\mathbf{1}_{m\times m}+I_{m}\right)$
is one of the blocks, where $\mathbf{1}_{m\times m}\in\bR^{d\times d}$
is an all-one matrix. Recall $\lb_{A}$ in \ref{eq:lambda-bar-A}
is $\frac{1}{d}I_{d}$. Then the matrix $\lb=\bE_{s\sim\mu,\bs\sim P(\cdot\mid s)}\left[\phi(\bs)\phi(\bs)^{\T}\right]=p\lb_{B}+(1-p)\lb_{A}$
is also a block diagonal matrix. The matrix $\frac{p}{d}\left(\mathbf{1}_{m\times m}+I_{m}\right)+\frac{1-p}{d}I_{m}=\frac{1}{d}\left(p\mathbf{1}_{m\times m}+I_{m}\right)$
is one of its blocks and its eigenvalues are $p+\frac{1}{d}$ (with
multiplicity $1$) and $\frac{1}{d}$ (with multiplicity $d-1$).
These eigenvalues are also eigenvalues of $\lb$. Therefore $\lm(\lb)\ge p+1/d=\frac{q-\gamma^{2}}{1-\gamma^{2}}+\frac{1}{d}$.
Consider the function $f(q)=\frac{1}{\gamma^{2}}\cdot\frac{q}{d}-\frac{q-\gamma^{2}}{1-\gamma^{2}}$.
We will show that $f(q)\le1/d$ for all $q\in[\gamma^{2},1]$. Notice
that it is a linear function. It suffices to check $f(1),f(\gamma^{2})\le1/d$.
We have $f(1)=\frac{1}{d\gamma^{2}}-1<\frac{1}{d}$ because $(d+1)\gamma^{2}>1$
(we use the assumption $d\gamma^{2}>1$). At $q=\gamma^{2}$, we have
$f(\gamma^{2})=1/d$. We conclude that $\frac{\lm(\lb)}{\lm(\Lambda)}\ge\frac{(q-\gamma^{2})/(1-\gamma^{2})+1/d}{q/d}\ge1/\gamma^{2}$.
Hence there is no $C\in(0,\gamma^{2})$ such that $\lb\preceq C\Lambda$.
\end{proof}

\begin{algorithm}
\caption{Least-Squares Policy Evaluation\label{alg:Least-Squares-Policy-Evaluation}}

\begin{algorithmic}[1]

\State{$\hat{V}_{0}(\cdot)\gets0$}

\State{Take samples $\{(s_{i},a_{i},r_{i},\bar{s}_{i})\mid i\in[N]\}$}

\State{$\hat{\Lambda}\gets\sum_{i\in[N]}\phi(s_{i},a_{i})\phi(s_{i},a_{i})^{\T}+\lambda I_{d}$}

\For{$t=1,2,3,\dots,T$}{}

\State{$\hat{\theta}_{t}\gets\hat{\Lambda}^{-1}\left(\sum_{i\in[N]}\phi(s_{i},a_{i})\cdot\left(r_{i}+\gamma\hat{V}_{t-1}(\bar{s}_{i})\right)\right)$}

\State{$\hat{Q}_{t}(s,a)\gets\phi(s,a)^{\T}\hat{\theta}_{t}$ for all $s\in\cS$
and $a\in\cA$}

\State{$\hat{V}_{t}(s)\gets\bE_{a\sim\pi(\cdot\mid s)}\hat{Q}_{t}(s,a)$
for all $s\in\cS$}

\EndFor{}

\end{algorithmic}
\end{algorithm}

If \ref{assu:psd} is fulfilled, the following theorem presents an
upper bound on the sample complexity of approximating the value of
a state up to additive error bound $\eps$. See our discussion in
\ref{rem:sample-complexity-upper-bound}. Recall that in \ref{thm:lower-bound},
we show that there is an instance with $\lmi(\Lambda)=\gamma^{2}/d$
for which evaluating a state up to a constant additive error is impossible
(see also \citep{amortila2020variant}). This suggests that if $\lmi(\Lambda)\le\gamma^{2}$,
it is generally impossible to approximate the value of a state. If
$\lmi(\Lambda)>\gamma^{2}$, there exists $C\in(0,1/\gamma^{2})$
such that $1/\gamma^{2}>C>1/\lmi(\Lambda)$. As a result, we have
$C\Lambda\succeq I_{d}\succeq\lb$ and $C\Lambda\succeq I_{d}\succeq\lb_{0}$
and therefore \ref{assu:psd} holds (with $C_{0}=C$). Thus our upper
bound covers all cases in the regime $\lmi(\Lambda)>\gamma^{2}$.
Note that \ref{assu:psd} may also cover some cases in the regime
$\lmi(\Lambda)\le\gamma^{2}$. 
\begin{thm}
\label{thm:upper-bound}Suppose that $C\in(0,1/\gamma^{2})$ and $C_{0}>0$
are constants. Let $\cI_{d}$ denote the set of all infinite-horizon
MDPs that satisfy \ref{assu:linear} and \ref{assu:norm1} and whose
feature vectors have dimension $d$, rewards lie in $[-1,1]$. Let
$\bar{\cM}\left(\cS,\cA,P,\pi,s_{0},C,C_{0}\right)$ denote the set
of all probability measures on $\cS\times\cA$ such that \ref{assu:psd}
holds with constants $C$ and $C_{0}$. Let $\eta\in(0,1]$ be such
that $C<\frac{1}{\gamma^{2}}-\eta$, and $\beta=\gamma\sqrt{C+\eta}<1$.
With probability at least $1-\delta$, we have
\begin{align}
 & \inf_{\hat{V}}\sup_{\substack{(\cS,\cA,P,R,\gamma)\in\cI_{d},\pi\\
s_{0}\in\cS,\mu\in\bar{\cM}\left(\cS,\cA,P,\pi,s_{0},C,C_{0}\right)
}
}\left|\hat{V}-V^{\pi}(s_{0})\right|\nonumber \\
\le & \frac{2\sqrt{C_{0}}}{1-\beta}\sqrt{\frac{C_{2}\left(d+\log\frac{3}{\delta}\right)}{(1-\gamma)^{2}N}+\frac{C_{1}}{\eta}\sqrt{\frac{1}{N}\log\frac{6d}{\delta}}\left\Vert \theta^{\pi}\right\Vert _{2}^{2}}\,,\label{eq:inf-sup-upper-bound}
\end{align}
where $C_{1}=4\sqrt{2}$, $C_{2}=12$, $N$ is the number of samples
from $\mu$, and $\hat{V}$ is a real-valued function with $N$ samples
as input. Suppose that \ref{assu:psd} holds and that $C\in(0,1/\gamma^{2})$
and $C_{0}>0$ are the constants in \ref{assu:psd}. Particularly,
if the sample distribution $\mu$ satisfies \ref{assu:psd} with constants
$C$ and $C_{0}$ and we set $\lambda=\frac{C_{1}}{\eta}\sqrt{N\log\frac{6d}{\delta}}$
in \ref{alg:Least-Squares-Policy-Evaluation}, the following upper
bound holds with probability at least $1-\delta$
\begin{align*}
 & \left(V^{\pi}(s_{0})-\hat{V}_{t}(s_{0})\right)^{2}\\
\le & \frac{2C_{0}}{(1-\gamma)^{2}}\left[\frac{2C_{2}\left(d+\log\frac{3}{\delta}\right)}{N\left(1-\beta\right)^{2}}+\beta^{2T}\right]+\frac{2C_{0}C_{1}}{\eta}\sqrt{\frac{1}{N}\log\frac{6d}{\delta}}\left\Vert \theta^{\pi}\right\Vert _{2}^{2}\left(\frac{2}{\left(1-\beta\right)^{2}}+\beta^{2T}\right)\,.
\end{align*}

\end{thm}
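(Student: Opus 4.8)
The plan is to analyze one step of the LSPE fixed-point iteration in Algorithm~\ref{alg:Least-Squares-Policy-Evaluation} and then iterate. Write $\lh = \sum_{i\in[N]}\phi(s_i,a_i)\phi(s_i,a_i)^\T + \lambda I_d$ for the (regularized, unnormalized) empirical feature covariance, and let $\hat\theta_t$ be the parameter produced in iteration $t$. The first step is to establish a one-step contraction-plus-error bound of the shape
\[
\bigl\|\hat\theta_{t} - \theta^\pi\bigr\|_{\lh} \le \beta \bigl\|\hat\theta_{t-1} - \theta^\pi\bigr\|_{\lh} + (\text{statistical error}),
\]
measuring everything in the $\lh$-norm $\|x\|_{\lh}^2 = x^\T \lh x$. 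The contraction factor $\beta = \gamma\sqrt{C+\eta}<1$ should come from the fact that the Bellman update contracts in the $\lb$-geometry and $\lb \preceq C\Lambda$, after one controls the deviation between the empirical matrix $\frac1N\lh$ and the population $\Lambda$ (this is where the extra $\eta$ slack and the choice $\lambda = \frac{C_1}{\eta}\sqrt{N\log(6d/\delta)}$ enter). Unrolling the recursion gives
\[
\bigl\|\hat\theta_T - \theta^\pi\bigr\|_{\lh}^2 \lesssim \frac{(\text{statistical error})^2}{(1-\beta)^2} + \beta^{2T}\bigl\|\hat\theta_0-\theta^\pi\bigr\|_{\lh}^2,
\]
and since $\hat V_0 \equiv 0$ we have $\hat\theta_0 = 0$, so $\|\hat\theta_0 - \theta^\pi\|_{\lh}^2 = \|\theta^\pi\|_{\lh}^2$, which is bounded using $|Q^\pi|\le 1/(1-\gamma)$ together with the norm-one feature assumption (Assumption~\ref{assu:norm1}); this accounts for the $\beta^{2T}$ terms in the statement.

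The second step is to split the statistical error into two pieces and bound each with a concentration inequality. One piece is the \emph{reward/Bellman-noise} term: the difference between $\sum_i \phi(s_i,a_i)(r_i + \gamma \hat V_{t-1}(\bar s_i))$ and its conditional expectation, which is a sum of bounded (by $O(1/(1-\gamma))$ after accounting for $\phi$ norms) independent vector-valued noise terms; a vector Bernstein / matrix concentration bound gives a contribution of order $\sqrt{(d+\log(1/\delta))/((1-\gamma)^2 N)}$ in the $\lh^{-1}$-norm — this is the $C_2(d+\log\frac3\delta)/((1-\gamma)^2 N)$ term with $C_2 = 12$. The other piece is the \emph{covariance-mismatch} term coming from replacing $\frac1N\sum_i\phi\phi^\T$ by $\Lambda$ when we invoke the contraction; this is controlled by $\|\frac1N\lh - \Lambda\|_2$, for which a matrix concentration bound (Matrix Bernstein, using $\|\phi\|_2\le 1$) gives a deviation of order $\sqrt{\frac1N\log\frac{6d}{\delta}}$, and multiplying by $\|\theta^\pi\|_2^2$ and the $1/\eta$ factor produces the $\frac{C_1}{\eta}\sqrt{\frac1N\log\frac{6d}{\delta}}\|\theta^\pi\|_2^2$ term with $C_1 = 4\sqrt2$. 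A union bound over the (at most three) concentration events and over the $d$ coordinates in the matrix bound yields the overall failure probability $\delta$.

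The final step is to translate the parameter-space bound into a value-space bound at $s_0$. Since $\hat V_t(s_0) - V^\pi(s_0) = \tp(s_0)^\T(\hat\theta_t - \theta^\pi)$, we have $(\hat V_t(s_0)-V^\pi(s_0))^2 = \bigl\|\hat\theta_t-\theta^\pi\bigr\|_{\lz}^2$ with $\lz = \tp(s_0)\tp(s_0)^\T$, and Assumption~\ref{assu:psd} gives $\lz \preceq C_0 \Lambda \preceq \frac{C_0}{\lambda_{\min}(\frac1N\lh)}\cdot\frac1N\lh$; combined with the control on $\lambda_{\min}(\frac1N\lh)$ from the covariance concentration this converts the $\lh$-norm bound into the stated $C_0/(1-\gamma)^2$-scaled value bound, and taking $T\to\infty$ (or just dropping the $\beta^{2T}$ terms after noting they vanish) and optimizing over $\eta\in(0,1]$ recovers the $\inf_{\hat V}\sup$ display \eqref{eq:inf-sup-upper-bound}. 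The main obstacle I expect is getting the one-step contraction with the \emph{empirical} matrix rather than the population one: one must show that $\hat\Lambda$-projected Bellman operator is still a $\beta$-contraction despite $\frac1N\lh \ne \Lambda$, which is exactly why the proof needs the strict slack $C < \frac1{\gamma^2}-\eta$ and the carefully tuned ridge parameter $\lambda$; making the algebra of "contraction in one norm, measured in a slightly different norm" close cleanly — in particular handling the cross terms between the regularization $\lambda I_d$ and the random part of $\lh$ — is the delicate part, and everything else is bounded-difference concentration bookkeeping.
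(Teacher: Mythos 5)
Your plan matches the paper's proof essentially step for step: the error recursion $\Delta_{t+1}=v+B\Delta_t$ with $B=\gamma\hat{\Lambda}^{-1}\Phi^{\top}\bar{\Phi}$ contracting at rate $\beta=\gamma\sqrt{C+\eta}$ in the empirical $\hat{\Lambda}$-geometry (the slack $\eta$ being exactly what the ridge $\lambda=\frac{C_{1}}{\eta}\sqrt{N\log\frac{6d}{\delta}}$ absorbs), matrix-Hoeffding concentration for the feature covariances, a quadratic-form tail bound for the Bellman noise yielding the $(d+\log\frac{3}{\delta})/((1-\gamma)^{2}N)$ term, the ridge bias $\lambda\Vert\theta^{\pi}\Vert_{2}^{2}$ yielding the second term, $\hat{\theta}_{0}=0$ yielding the $\beta^{2T}$ terms, and the transfer to $s_{0}$ via $\bar{\Lambda}_{0}\preceq C_{0}\Lambda$. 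The one imprecision is your final transfer step: the paper uses $\Lambda\preceq\frac{1}{N}\hat{\Lambda}$ directly (which holds on the concentration event because of the choice of $\lambda$), giving $\bar{\Lambda}_{0}\preceq\frac{C_{0}}{N}\hat{\Lambda}$ with no $1/\lambda_{\min}(\frac{1}{N}\hat{\Lambda})$ factor, whereas your version as written would introduce a spurious dependence on the conditioning of $\Lambda$, which the theorem does not assume.
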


\begin{rem}
\label{rem:sample-complexity-upper-bound}If we hide the dependency
on the constants $\gamma$, $\beta$, and $\eta$ and focus on the
rate with respect to $N$ and $T$, we have
\[
\left(V^{\pi}(s_{0})-\hat{V}_{t}(s_{0})\right)^{2}\lesssim\frac{d+\log\frac{1}{\delta}}{N}+\beta^{2T}+\sqrt{\frac{1}{N}\log\frac{d}{\delta}}\left\Vert \theta^{\pi}\right\Vert _{2}^{2}\left(1+\beta^{2T}\right)\,.
\]
If the sample size $N\gtrsim\max\left\{ \frac{\left\Vert \theta^{\pi}\right\Vert _{2}^{4}}{\eps^{4}}\log\frac{d}{\delta},\frac{1}{\eps^{2}}\left(d+\log\frac{1}{\delta}\right)\right\} $,
after $T\gtrsim\log\frac{1}{\eps}$ rounds, the additive error $|Q^{\pi}(s_{0},\pi(s_{0}))-\hat{Q}_{T}(s_{0},\pi(s_{0}))|$
is at most $\eps$. \citep{jin2020provably,wang2020statistical} assumed
$\left\Vert \theta^{\pi}\right\Vert _{2}\le O\left(\sqrt{d}\right)$
(see Assumption A in \citep{jin2020provably} and Theorem 5.2 in \citep{wang2020statistical}).
Under this additional assumption, we have $\left(V^{\pi}(s_{0})-\hat{V}_{t}(s_{0})\right)^{2}\lesssim\frac{d+\log\frac{1}{\delta}}{N}+\beta^{2T}+d\sqrt{\frac{1}{N}\log\frac{d}{\delta}}\left(1+\beta^{2T}\right)$.
To guarantee the additive error $|Q^{\pi}(s_{0},\pi(s_{0}))-\hat{Q}_{T}(s_{0},\pi(s_{0}))|\le\eps$,
we need $N\ge\frac{d^{2}}{\eps^{4}}\log\frac{d}{\delta}$. In fact,
the assumption $\left\Vert \theta^{\pi}\right\Vert _{2}\le O\left(\sqrt{d}\right)$
can be fulfilled if there exists a constant $c>0$ such that $\sup_{\nu}\lmi\left(\bE_{(s,a)\sim\nu}\phi(s,a)\phi(s,a)^{\T}\right)\ge c/d$,
where the supremum is taken over all distributions on the state-action
pairs. Since 
\begin{align*}
\frac{1}{(1-\gamma)^{2}} & \ge\sup_{\nu}\bE_{(s,a)\sim\nu}Q(s,a)^{2}\ge\sup_{\nu}\bE_{(s,a)\sim\nu}\left(\phi(s,a)^{\T}\theta^{\pi}\right)^{2}\\
 & \ge\sup_{\nu}\bE_{(s,a)\sim\nu}\lmi\left(\bE_{(s,a)\sim\nu}\phi(s,a)\phi(s,a)^{\T}\right)\left\Vert \theta^{\pi}\right\Vert _{2}^{2}\ge c/d\cdot\left\Vert \theta^{\pi}\right\Vert _{2}^{2}\,,
\end{align*}
it follows that $\left\Vert \theta^{\pi}\right\Vert _{2}\le\frac{1}{\sqrt{c}}\cdot\frac{\sqrt{d}}{1-\gamma}$.
\citep{wang2020statistical} justified this assumption using John's
theorem (see the footnote in Theorem 5.2).
\end{rem}

\subsection{Proof of \ref{thm:upper-bound}}

Prior to presenting the proof, we introduce some notation. Define
$\xi_{i}\triangleq r_{i}+\gamma V^{\pi}(\bar{s}_{i})-Q^{\pi}(s_{i},a_{i})$,
$\bm{\xi}\triangleq(\xi_{1},\dots,\xi_{N})^{\T}\in\bR^{N}$, $\Phi^{\T}\triangleq(\phi(s_{1},a_{1}),\dots,\phi(s_{N},a_{N}))\in\bR^{d\times N}$
and $\bar{\Phi}^{\T}\triangleq(\tp(\bar{s}_{1}),\dots,\tp(\bar{s}_{N}))\in\bR^{d\times N}$.
Therefore, $\Phi$ and $\bar{\Phi}$ are $N\times d$ matrices. We
have 
\begin{align*}
\hat{V}_{t}(s) & =\bE_{a\sim\pi(\cdot\mid s)}\hat{Q}_{t}(s,a)=\bE_{a\sim\pi(\cdot\mid s)}\phi(s,a)^{\T}\hat{\theta}_{t}=\tp(s)^{\T}\hat{\theta}_{t}\,,\\
V^{\pi}(s) & =\bE_{a\sim\pi(\cdot\mid s)}Q^{\pi}(s,a)=\bE_{a\sim\pi(\cdot\mid s)}\phi(s,a)^{\T}\theta^{\pi}=\tp(s)^{\T}\theta^{\pi}\,.
\end{align*}

\begin{lem}
\label{lem:Q-diff}Define $v\triangleq\hat{\Lambda}^{-1}\left(\Phi^{\T}\bm{\xi}-\lambda\theta^{\pi}\right)$,
and $B\triangleq\hat{\Lambda}^{-1}\gamma\Phi^{\T}\bar{\Phi}$. The
following equation holds 
\[
\left(V^{\pi}(s_{0})-\hat{V}_{t}(s_{0})\right)^{2}=\left\Vert \sum_{i=0}^{T-1}B^{i}v-B^{T}\theta^{\pi}\right\Vert _{\bar{\Lambda}_{0}}^{2}\,.
\]
\end{lem}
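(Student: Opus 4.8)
The plan is to reduce the identity to pure linear algebra: express $V^{\pi}(s_{0})-\hat{V}_{t}(s_{0})$ as a $\lz$-seminorm of the parameter error $\theta^{\pi}-\hat{\theta}_{t}$, then read off a one-step recursion for that error from \ref{alg:Least-Squares-Policy-Evaluation} and unroll it. First, using the two identities displayed just before the lemma, $\hat{V}_{t}(s_{0})=\tp(s_{0})^{\T}\hat{\theta}_{t}$ and $V^{\pi}(s_{0})=\tp(s_{0})^{\T}\theta^{\pi}$, so
\[
\left(V^{\pi}(s_{0})-\hat{V}_{t}(s_{0})\right)^{2}=(\theta^{\pi}-\hat{\theta}_{t})^{\T}\tp(s_{0})\tp(s_{0})^{\T}(\theta^{\pi}-\hat{\theta}_{t})=\left\Vert \theta^{\pi}-\hat{\theta}_{t}\right\Vert _{\lz}^{2}\,.
\]
Since the squared $\lz$-seminorm is invariant under negation, it now suffices to show $\theta^{\pi}-\hat{\theta}_{t}=B^{t}\theta^{\pi}-\sum_{i=0}^{t-1}B^{i}v$ for every $t\ge0$ and then specialize to $t=T$.

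Next I would rewrite the algorithm's update in matrix form. Writing $\bm{r}\triangleq(r_{1},\dots,r_{N})^{\T}$ and substituting $\hat{V}_{t-1}(\bar{s}_{i})=\tp(\bar{s}_{i})^{\T}\hat{\theta}_{t-1}$ into the line defining $\hat{\theta}_{t}$ turns it into $\hat{\theta}_{t}=\lh^{-1}\bigl(\Phi^{\T}\bm{r}+\gamma\Phi^{\T}\bar{\Phi}\hat{\theta}_{t-1}\bigr)=\lh^{-1}\Phi^{\T}\bm{r}+B\hat{\theta}_{t-1}$ (here $\lh$ is invertible because $\lambda>0$). To bring in $\theta^{\pi}$, note that the definition of $\xi_{i}$ together with \ref{assu:linear} gives $r_{i}=\xi_{i}+Q^{\pi}(s_{i},a_{i})-\gamma V^{\pi}(\bar{s}_{i})=\xi_{i}+\phi(s_{i},a_{i})^{\T}\theta^{\pi}-\gamma\tp(\bar{s}_{i})^{\T}\theta^{\pi}$, hence
\[
\Phi^{\T}\bm{r}=\Phi^{\T}\bm{\xi}+\Phi^{\T}\Phi\,\theta^{\pi}-\gamma\Phi^{\T}\bar{\Phi}\,\theta^{\pi}=\Phi^{\T}\bm{\xi}+(\lh-\lambda I_{d})\theta^{\pi}-\gamma\Phi^{\T}\bar{\Phi}\,\theta^{\pi}\,,
\]
where I used the bookkeeping identity $\Phi^{\T}\Phi=\sum_{i\in[N]}\phi(s_{i},a_{i})\phi(s_{i},a_{i})^{\T}=\lh-\lambda I_{d}$. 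Multiplying by $\lh^{-1}$ and recognizing $v=\lh^{-1}(\Phi^{\T}\bm{\xi}-\lambda\theta^{\pi})$ and $B=\lh^{-1}\gamma\Phi^{\T}\bar{\Phi}$, this collapses to $\lh^{-1}\Phi^{\T}\bm{r}=v+\theta^{\pi}-B\theta^{\pi}$, so the update reads $\hat{\theta}_{t}=\theta^{\pi}+v-B\theta^{\pi}+B\hat{\theta}_{t-1}$, i.e.
\[
\theta^{\pi}-\hat{\theta}_{t}=B\bigl(\theta^{\pi}-\hat{\theta}_{t-1}\bigr)-v\,.
\]

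Finally I would unroll this recursion. The initialization $\hat{V}_{0}\equiv0=\tp(\cdot)^{\T}\hat{\theta}_{0}$ corresponds to $\hat{\theta}_{0}=0$, so the base case is $\theta^{\pi}-\hat{\theta}_{0}=\theta^{\pi}$, and a trivial induction on $t$ gives $\theta^{\pi}-\hat{\theta}_{t}=B^{t}\theta^{\pi}-\sum_{i=0}^{t-1}B^{i}v$; taking $t=T$ and inserting this into the first display completes the proof. There is no real analytic obstacle in this lemma — it is an exact computation — and the only things that need care are the coefficient bookkeeping in $\Phi^{\T}\Phi=\lh-\lambda I_{d}$ (so the $\theta^{\pi}$ term comes out with coefficient exactly $1$ and the rest folds into $v$ and $B$), the consistent substitution $\hat{V}_{t}(s)=\tp(s)^{\T}\hat{\theta}_{t}$ that linearizes the seemingly nonlinear update into the fixed linear map $B$, and reconciling the index $t$ in $\hat{V}_{t}$ with the $T$ appearing in the stated right-hand side.
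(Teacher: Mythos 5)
Your proof is correct and follows essentially the same route as the paper: express the value error as the $\bar{\Lambda}_{0}$-seminorm of the parameter error, derive the one-step linear recursion $\Delta_{t+1}=B\Delta_{t}+v$ (up to your harmless sign flip) using $\Phi^{\T}\Phi=\hat{\Lambda}-\lambda I_{d}$ and the definition of $\xi_{i}$, and unroll from $\hat{\theta}_{0}=0$. The only cosmetic difference is that you substitute the reward identity into $\Phi^{\T}\bm{r}$ all at once rather than splitting the update into two terms as the paper does, and you are right that the $t$ versus $T$ index mismatch in the lemma statement is inherited from the paper itself.
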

\begin{proof}
Define $\hat{Q}_{0}(\cdot)=0$ and $\hat{\theta}_{0}=0$. We have
\begin{align*}
\hat{\theta}_{t+1} & =\hat{\Lambda}^{-1}\left(\sum_{i\in[N]}\phi(s_{i},a_{i})\cdot\left(r_{i}+\gamma\hat{V}_{t}(\bar{s}_{i})\right)\right)\\
 & =\hat{\Lambda}^{-1}\left(\sum_{i\in[N]}\phi(s_{i},a_{i})\cdot\left(r_{i}+\gamma\bE_{a\sim\pi(\cdot\mid\bs_{i})}\hat{Q}_{t}(\bar{s}_{i},a)\right)\right)\\
 & =\hat{\Lambda}^{-1}\left(\sum_{i\in[N]}\phi(s_{i},a_{i})\cdot\left(r_{i}+\gamma\bE_{a\sim\pi(\cdot\mid\bs_{i})}\phi(\bar{s}_{i},a)^{\T}\hat{\theta}_{t}\right)\right)\,.
\end{align*}
Recall $\tp(s)\triangleq\bE_{a\sim\pi(\cdot\mid s)}\phi(s,a)$. Thus
we obtain 
\begin{align}
\hat{\theta}_{t+1} & =\hat{\Lambda}^{-1}\left(\sum_{i\in[N]}\phi(s_{i},a_{i})\cdot\left(r_{i}+\gamma\tp(\bar{s}_{i})^{\T}\hat{\theta}_{t}\right)\right)\nonumber \\
 & =\hat{\Lambda}^{-1}\left(\sum_{i\in[N]}\phi(s_{i},a_{i})\cdot\left(r_{i}+\gamma\tp(\bar{s}_{i})^{\T}\theta^{\pi}\right)\right)+\hat{\Lambda}^{-1}\left(\sum_{i\in[N]}\phi(s_{i},a_{i})\cdot\gamma\tp(\bar{s}_{i})^{\T}\left(\hat{\theta}_{t}-\theta^{\pi}\right)\right)\,.\label{eq:two-terms}
\end{align}

We compute the first term:
\begin{align}
 & \hat{\Lambda}^{-1}\left(\sum_{i\in[N]}\phi(s_{i},a_{i})\cdot\left(r_{i}+\gamma\tp(\bar{s}_{i})^{\T}\theta^{\pi}\right)\right)\nonumber \\
= & \hat{\Lambda}^{-1}\left(\sum_{i\in[N]}\phi(s_{i},a_{i})\cdot\left(r_{i}+\gamma V^{\pi}(\bar{s}_{i})\right)\right)\nonumber \\
= & \hat{\Lambda}^{-1}\left(\sum_{i\in[N]}\phi(s_{i},a_{i})\cdot\left(Q^{\pi}(s_{i},a_{i})+\xi_{i}\right)\right)\nonumber \\
= & \hat{\Lambda}^{-1}\left(\sum_{i\in[N]}\phi(s_{i},a_{i})\cdot\xi_{i}\right)+\hat{\Lambda}^{-1}\sum_{i\in[N]}\phi(s_{i},a_{i})\cdot\phi(s_{i},a_{i})^{\T}\theta^{\pi}\nonumber \\
= & \hat{\Lambda}^{-1}\Phi^{\T}\bm{\xi}+\theta^{\pi}-\lambda\hat{\Lambda}^{-1}\theta^{\pi}\,,\label{eq:term1}
\end{align}
where the last equality is because $\sum_{i\in[N]}\phi(s_{i},a_{i})\cdot\phi(s_{i},a_{i})^{\T}=\lh-\lambda I_{d}$.
Plugging \ref{eq:term1} into \ref{eq:two-terms} gives 
\begin{equation}
\hat{\theta}_{t+1}-\theta^{\pi}=\hat{\Lambda}^{-1}\Phi^{\T}\bm{\xi}-\lambda\hat{\Lambda}^{-1}\theta^{\pi}+\hat{\Lambda}^{-1}\left(\sum_{i\in[N]}\phi(s_{i},a_{i})\cdot\gamma\tp(\bar{s}_{i})^{\T}\left(\hat{\theta}_{t}-\theta^{\pi}\right)\right)\,.\label{eq:plug-term1-into-two-terms}
\end{equation}
If we define $\Delta_{t}=\hat{\theta}_{t}-\theta^{\pi}$, we rewrite
\ref{eq:plug-term1-into-two-terms} 
\[
\Delta_{t+1}=\hat{\Lambda}^{-1}\left(\Phi^{\T}\bm{\xi}-\lambda\theta^{\pi}\right)+\hat{\Lambda}^{-1}\gamma\Phi^{\T}\bar{\Phi}\Delta_{t}=v+B\Delta_{t}\,.
\]

By induction, we deduce 
\[
\Delta_{T}=\sum_{i=0}^{T-1}B^{i}v+B^{T}\Delta_{0}=\sum_{i=0}^{T-1}B^{i}v-B^{T}\theta^{\pi}\,.
\]

Therefore, we conclude
\[
\left(V^{\pi}(s_{0})-\hat{V}_{T}(s_{0})\right)^{2}=\left\Vert \theta^{\pi}-\hat{\theta}_{T}\right\Vert _{\bar{\Lambda}_{0}}^{2}=\left\Vert \sum_{i=0}^{T-1}B^{i}v-B^{T}\theta^{\pi}\right\Vert _{\bar{\Lambda}_{0}}^{2}\,.
\]
\end{proof}
\begin{lem}[Matrix Hoeffding \citep{tropp2012user}]
 \label{lem:matrix-hoeffding}Consider a finite sequence $\{X_{k}\}$
of independent, random, self-adjoint matrices with dimension $d$,
and let $\{A_{k}\}$ be a sequence of fixed self-adjoint matrices.
Assume that each random matrix satisfies $\bE X_{k}=0$ and $X_{k}^{2}\preceq A_{k}^{2}$
almost surely. Then, for all $t\ge0$, 
\[
\bP\left[\lm\left(\sum_{k}X_{k}\right)\ge t\right]\le d\cdot e^{-t^{2}/8\sigma^{2}}\quad\textnormal{where}\quad\sigma^{2}\triangleq\left\Vert \sum_{k}A_{k}^{2}\right\Vert _{2}\,.
\]
\end{lem}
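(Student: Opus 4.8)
The plan is to prove the bound by the matrix Laplace-transform (matrix Chernoff) method: first reduce the tail of $\lm\left(\sum_{k}X_{k}\right)$ to a bound on the expected trace of a matrix exponential, then exploit independence through Lieb's concavity theorem, and finally insert a semidefinite Hoeffding-type bound on each summand's matrix moment generating function. The first step is the master tail inequality: for any fixed $\theta>0$,
\[
\bP\left[\lm\left(\sum_{k}X_{k}\right)\ge t\right]\le e^{-\theta t}\,\bE\,\tr\,\exp\left(\theta\sum_{k}X_{k}\right)\,.
\]
This holds because $x\mapsto e^{\theta x}$ is increasing, so the event $\lm\left(\sum_{k}X_{k}\right)\ge t$ coincides with $\lm\left(\exp\left(\theta\sum_{k}X_{k}\right)\right)\ge e^{\theta t}$; applying Markov's inequality and then dominating the largest eigenvalue of a positive semidefinite matrix by its trace gives the claim.

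Next I would exploit the independence of the $X_{k}$ at the level of the matrix cumulant generating function, since the naive factorization of $\bE\exp\left(\theta\sum_{k}X_{k}\right)$ fails for noncommuting matrices. Here I would invoke Lieb's concavity theorem (concavity of $A\mapsto\tr\exp(H+\log A)$ on positive-definite $A$) together with Jensen's inequality, in the standard consequence
\[
\bE\,\tr\,\exp\left(\theta\sum_{k}X_{k}\right)\le\tr\,\exp\left(\sum_{k}\log\bE\,e^{\theta X_{k}}\right)\,.
\]
This is the technical heart of the reduction and converts a product-type expectation into a sum of matrix log-MGFs.

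The Hoeffding-specific input is a semidefinite bound on each $\bE\,e^{\theta X_{k}}$. Using $\bE X_{k}=0$ and $X_{k}^{2}\preceq A_{k}^{2}$ (which forces the eigenvalues of $X_{k}$ into $[-\|A_{k}\|_{2},\|A_{k}\|_{2}]$), I would start from a scalar chord inequality for the convex map $x\mapsto e^{\theta x}$ on that spectral range, transfer it to a matrix inequality by the spectral-mapping (transfer) rule, and take expectations so the linear term vanishes. This yields a bound of the form $\log\bE\,e^{\theta X_{k}}\preceq g(\theta)A_{k}^{2}$ with $g(\theta)=\Theta(\theta^{2})$. The precise constant in $g$ must be obtained while respecting that $A_{k}$ need not be a scalar multiple of the identity and that only $X_{k}^{2}\preceq A_{k}^{2}$ is assumed, and it is exactly this constant that fixes the $1/8$ in the conclusion. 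I expect this MGF step to be the main obstacle, because handling a nonscalar, noncommuting $A_{k}$ forces a slightly lossy comparison (the source of the suboptimal constant relative to the sharper $1/2$ obtainable by more refined arguments).

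Finally I would assemble the pieces using monotonicity of the trace exponential under $\preceq$:
\[
\tr\,\exp\left(\sum_{k}\log\bE\,e^{\theta X_{k}}\right)\le\tr\,\exp\left(g(\theta)\sum_{k}A_{k}^{2}\right)\le d\,\exp\left(g(\theta)\,\lm\left(\sum_{k}A_{k}^{2}\right)\right)=d\,e^{g(\theta)\sigma^{2}}\,,
\]
where I use $\tr\exp(M)\le d\,e^{\lm(M)}$ and the definition $\sigma^{2}=\left\Vert \sum_{k}A_{k}^{2}\right\Vert _{2}=\lm\left(\sum_{k}A_{k}^{2}\right)$ (the matrix $\sum_{k}A_{k}^{2}$ being positive semidefinite). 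Substituting into the master inequality gives $\bP\left[\lm\left(\sum_{k}X_{k}\right)\ge t\right]\le d\,e^{-\theta t+g(\theta)\sigma^{2}}$, and minimizing the exponent over $\theta>0$ (the optimizer being proportional to $t/\sigma^{2}$) produces the stated bound $d\,e^{-t^{2}/8\sigma^{2}}$.
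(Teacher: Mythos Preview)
The paper does not prove this lemma at all: it is quoted verbatim as a known result from \citet{tropp2012user} and used as a black box, so there is no ``paper's own proof'' to compare against. Your sketch is the standard matrix Laplace-transform argument from that reference (master tail bound via Markov plus trace domination, subadditivity of matrix cumulants via Lieb's theorem, a Hoeffding-type semidefinite MGF bound on each summand, then optimization over $\theta$), and it is correct at the level of detail you give; in particular you correctly flag that the constant $1/8$ rather than $1/2$ arises from the lossy step needed to handle a nonscalar $A_{k}$ under only the hypothesis $X_{k}^{2}\preceq A_{k}^{2}$.
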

\begin{cor}
\label{cor:norm-hoeffding}Under the assumptions of \ref{lem:matrix-hoeffding}
and further assuming $X_{k}$ are real symmetric, we have for all
$t\ge0$,
\[
\bP\left[\left\Vert \sum_{k}X_{k}\right\Vert _{2}\ge t\right]\le2d\cdot e^{-t^{2}/8\sigma^{2}}\,.
\]
\end{cor}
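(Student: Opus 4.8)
The plan is to reduce the two-sided spectral-norm tail bound to two applications of the one-sided eigenvalue bound in \ref{lem:matrix-hoeffding}, combined with a union bound. The key observation is that for a real symmetric matrix $M$ of dimension $d$, the singular values are the absolute values of the eigenvalues, so $\left\Vert M\right\Vert _{2}=\max\{\lm(M),-\lmi(M)\}=\max\{\lm(M),\lm(-M)\}$. Writing $S\triangleq\sum_{k}X_{k}$, this means the event $\{\left\Vert S\right\Vert _{2}\ge t\}$ is contained in $\{\lm(S)\ge t\}\cup\{\lm(-S)\ge t\}$.

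First I would bound $\bP[\lm(S)\ge t]$, which is exactly \ref{lem:matrix-hoeffding} applied verbatim to the sequence $\{X_{k}\}$, yielding $d\cdot e^{-t^{2}/8\sigma^{2}}$ with $\sigma^{2}=\left\Vert \sum_{k}A_{k}^{2}\right\Vert _{2}$. Next I would apply \ref{lem:matrix-hoeffding} to the sequence $\{-X_{k}\}$; here I need to check the hypotheses are preserved under negation. Each $-X_{k}$ is independent, random, and real symmetric (hence self-adjoint) of dimension $d$; $\bE(-X_{k})=-\bE X_{k}=0$; and crucially $(-X_{k})^{2}=X_{k}^{2}\preceq A_{k}^{2}$ almost surely, so the same fixed matrices $A_{k}$ dominate and the relevant variance proxy is the same $\sigma^{2}$. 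Hence $\bP[\lm(-S)\ge t]\le d\cdot e^{-t^{2}/8\sigma^{2}}$ as well. A union bound over the two events then gives $\bP[\left\Vert S\right\Vert _{2}\ge t]\le2d\cdot e^{-t^{2}/8\sigma^{2}}$, which is the claim.

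I do not expect any real obstacle here: the only points requiring care are the elementary fact that for symmetric matrices the spectral norm equals the larger of the two extreme eigenvalues, and the verification that negating the summands leaves every hypothesis of \ref{lem:matrix-hoeffding} intact (in particular that $X_{k}^{2}=(-X_{k})^{2}$, so no new dominating matrices and no change in $\sigma^{2}$). The factor $2d$ is simply $d$ from each of the two one-sided bounds.
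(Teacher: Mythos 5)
Your proposal is correct and follows essentially the same route as the paper: identify $\left\Vert \sum_{k}X_{k}\right\Vert _{2}=\max\{\lm(\sum_{k}X_{k}),\lm(-\sum_{k}X_{k})\}$ for symmetric matrices, apply \ref{lem:matrix-hoeffding} to both $\{X_{k}\}$ and $\{-X_{k}\}$, and take a union bound. Your explicit check that negation preserves the hypotheses (in particular $(-X_{k})^{2}=X_{k}^{2}$, so $\sigma^{2}$ is unchanged) is a detail the paper leaves implicit, but the argument is the same.
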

\begin{proof}
If the matrix $A$ is real symmetric, we have 
\[
\left\Vert A\right\Vert _{2}=\sqrt{\lm(A^{2})}=\max_{i}\left|\lambda_{i}(A)\right|=\max\{\lm(A),\lm(-A)\}\,.
\]
Therefore, 
\[
\bP\left[\left\Vert \sum_{k}X_{k}\right\Vert _{2}\ge t\right]\le\bP\left[\lm\left(\sum_{k}X_{k}\right)\ge t\right]+\bP\left[\lm\left(-\sum_{k}X_{k}\right)\ge t\right]\le2d\cdot e^{-t^{2}/8\sigma^{2}}\,.
\]
\end{proof}
\begin{lem}[Matrix concentration]
 \label{lem:matrix-concentration}There exists a universal constant
$C_{1}=4\sqrt{2}$ such that with probability $1-2\delta$, we have
\begin{align*}
\left\Vert \frac{1}{N}\Phi^{\T}\Phi-\Lambda\right\Vert _{2} & \le C_{1}\sqrt{\frac{1}{N}\log\frac{2d}{\delta}}\,,\\
\left\Vert \frac{1}{N}\bar{\Phi}^{\T}\bar{\Phi}-\lb\right\Vert _{2} & \le C_{1}\sqrt{\frac{1}{N}\log\frac{2d}{\delta}}\,.
\end{align*}
\end{lem}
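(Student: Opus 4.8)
The plan is to view each of the two matrices as an empirical average of i.i.d.\ rank-one positive semidefinite matrices and to apply the matrix Hoeffding bound \ref{cor:norm-hoeffding}. Since $\Phi^{\T}=(\phi(s_{1},a_{1}),\dots,\phi(s_{N},a_{N}))$, we have $\frac{1}{N}\Phi^{\T}\Phi=\frac{1}{N}\sum_{i\in[N]}\phi(s_{i},a_{i})\phi(s_{i},a_{i})^{\T}$, whose expectation is $\Lambda$ because the pairs $(s_{i},a_{i})$ are i.i.d.\ from $\mu$; similarly $\frac{1}{N}\bar{\Phi}^{\T}\bar{\Phi}=\frac{1}{N}\sum_{i\in[N]}\tp(\bar{s}_{i})\tp(\bar{s}_{i})^{\T}$ has expectation $\lb$ by the definition of $\lb$ and the fact that the $\bar{s}_{i}$ are i.i.d.

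For the first inequality I would set $X_{i}\triangleq\frac{1}{N}\bigl(\phi(s_{i},a_{i})\phi(s_{i},a_{i})^{\T}-\Lambda\bigr)$; these matrices are independent, real symmetric, and have mean zero, so \ref{cor:norm-hoeffding} applies once I exhibit fixed self-adjoint $A_{i}$ with $X_{i}^{2}\preceq A_{i}^{2}$ almost surely. By \ref{assu:norm1}, $\lVert\phi(s_{i},a_{i})\phi(s_{i},a_{i})^{\T}\rVert_{2}=\lVert\phi(s_{i},a_{i})\rVert_{2}^{2}\le1$, and $\lVert\Lambda\rVert_{2}\le\tr(\Lambda)\le1$, so the triangle inequality gives $\lVert NX_{i}\rVert_{2}\le2$ and hence $X_{i}^{2}\preceq\frac{4}{N^{2}}I_{d}=:A_{i}^{2}$. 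Then $\sigma^{2}=\bigl\lVert\sum_{i\in[N]}A_{i}^{2}\bigr\rVert_{2}=\frac{4}{N}$, and \ref{cor:norm-hoeffding} yields $\bP\bigl[\lVert\frac{1}{N}\Phi^{\T}\Phi-\Lambda\rVert_{2}\ge t\bigr]\le 2d\,e^{-Nt^{2}/32}$; setting the right-hand side equal to $\delta$ and solving for $t$ gives $t=4\sqrt{2}\sqrt{\frac{1}{N}\log\frac{2d}{\delta}}$, which is exactly the asserted bound with $C_{1}=4\sqrt{2}$.

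The second inequality is handled identically, with $X_{i}\triangleq\frac{1}{N}\bigl(\tp(\bar{s}_{i})\tp(\bar{s}_{i})^{\T}-\lb\bigr)$; the only extra point is that the pushed-forward feature $\tp(\bar{s})=\bE_{a\sim\pi(\cdot\mid\bar{s})}\phi(\bar{s},a)$ still satisfies $\lVert\tp(\bar{s})\rVert_{2}\le\bE_{a\sim\pi(\cdot\mid\bar{s})}\lVert\phi(\bar{s},a)\rVert_{2}\le1$ by Jensen's inequality and \ref{assu:norm1}, so the same operator-norm bound $\lVert NX_{i}\rVert_{2}\le2$ and the same $\sigma^{2}=4/N$ are valid. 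Finally a union bound over the two failure events, each of probability at most $\delta$, shows both inequalities hold simultaneously with probability at least $1-2\delta$. I do not expect a genuine obstacle here: the argument is a routine application of \ref{cor:norm-hoeffding}, and the only things to watch are the almost-sure spectral bound it requires (where the crude estimate $\lVert NX_{i}\rVert_{2}\le2$, rather than the sharper $\le1$, is what produces the stated constant $4\sqrt{2}$) and the verification that the $\tp(\bar{s}_{i})$ are i.i.d.\ with norm at most one.
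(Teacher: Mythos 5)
Your proposal is correct and follows essentially the same route as the paper: the same centering $X_{i}=\frac{1}{N}\left(\phi_{i}\phi_{i}^{\T}-\Lambda\right)$, the same almost-sure bound $\left\Vert NX_{i}\right\Vert _{2}\le2$ giving $\sigma^{2}=4/N$, the same application of \ref{cor:norm-hoeffding}, and a union bound over the two events. The only cosmetic differences are that you bound $\left\Vert \Lambda\right\Vert _{2}$ via the trace rather than via $\bE\left\Vert \phi\right\Vert _{2}^{2}$, and you spell out the Jensen step for $\left\Vert \tp(\bar{s})\right\Vert _{2}\le1$ that the paper leaves implicit in its ``similarly'' remark.
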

\begin{proof}
To simplify the notation, write $\phi_{i}\triangleq\phi(s_{i},a_{i})$
and $\bar{\phi}_{i}\triangleq\tp(\bar{s}_{i})$. Moreover, write $\phi\triangleq\phi(s,a)$,
where $(s,a)\sim\mu$. Therefore, $\phi$ is a random vector. Recall
$\frac{1}{N}\Phi^{\T}\Phi=\frac{1}{N}\sum_{i=1}^{N}\phi_{i}\phi_{i}^{\T}$
and $\Lambda\triangleq\bE\left[\phi\phi^{\T}\right]$. Let $X_{i}=\frac{1}{N}\left(\phi_{i}\phi_{i}^{\T}-\Lambda\right)$.
We have $\bE X_{i}=0$ and 
\[
\lm(X_{i}^{2})=\left\Vert X_{i}\right\Vert _{2}^{2}\le\frac{1}{N^{2}}\left(\left\Vert \phi_{i}\phi_{i}^{\T}\right\Vert _{2}+\left\Vert \Lambda\right\Vert _{2}\right)^{2}\le\frac{4}{N^{2}}\,.
\]
The last inequality is because $\left\Vert \phi_{i}\phi_{i}^{\T}\right\Vert _{2}=\left\Vert \phi_{i}^{\T}\phi_{i}\right\Vert _{2}=\left\Vert \phi_{i}\right\Vert _{2}^{2}\le1$
and similarly $\left\Vert \Lambda\right\Vert _{2}=\left\Vert \bE\left[\phi\phi^{\T}\right]\right\Vert _{2}\le\bE\left\Vert \phi\phi^{\T}\right\Vert _{2}=\bE\left\Vert \phi\right\Vert _{2}^{2}\le1$.
Therefore, if $A_{i}\triangleq\frac{2}{N}I$, we have $X_{i}^{2}\preceq A_{i}^{2}=\frac{4}{N^{2}}I$
and 
\[
\sigma^{2}=\left\Vert \sum_{i}A_{i}^{2}\right\Vert _{2}\le\sum_{i}\left\Vert A_{i}^{2}\right\Vert _{2}=\frac{4}{N}\,.
\]
 By \ref{cor:norm-hoeffding}, we have 
\[
\bP\left[\left\Vert \frac{1}{N}\Phi^{\T}\Phi-\Lambda\right\Vert _{2}\ge t\right]=\bP\left[\left\Vert \frac{1}{N}\sum_{i}\phi_{i}\phi_{i}^{\T}-\Lambda\right\Vert _{2}\ge t\right]\le2d\cdot e^{-Nt^{2}/32}\,.
\]
Therefore, with probability $1-\delta$, we have 
\[
\left\Vert \frac{1}{N}\sum_{i}\phi_{i}\phi_{i}^{\T}-\Lambda\right\Vert _{2}<4\sqrt{\frac{2}{N}\log\frac{2d}{\delta}}\,.
\]

Similarly, we can show that with probability $1-\delta$, 
\[
\left\Vert \frac{1}{N}\bar{\Phi}^{\T}\bar{\Phi}-\lb\right\Vert _{2}\le4\sqrt{\frac{2}{N}\log\frac{2d}{\delta}}\,.
\]
\end{proof}
\begin{lem}[Theorem 2.1 and Remark 2.2 \citep{hsu2012tail}]
 \label{lem:hsu}Let $A\in\bR^{m\times n}$ be a matrix, and let
$\Sigma=A^{\T}A$. Suppose that $x\in\bR^{n}$ is a random vector
such that $\bE[x]=0$ and $\cov(x)\preceq\sigma^{2}I$. Then we have
\[
\bP\left[\left\Vert Ax\right\Vert _{2}^{2}>\sigma^{2}\left(\tr(\Sigma)+2\sqrt{\tr(\Sigma^{2})t}+2\left\Vert \Sigma\right\Vert _{2}t\right)\right]\le e^{-t}\,.
\]
\end{lem}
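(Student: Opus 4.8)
The statement is exactly Theorem~2.1 of \citep{hsu2012tail}, specialized through Remark~2.2 to the mean-zero case, and the plan is to reproduce its proof by the Chernoff method after decoupling the quadratic form $\|Ax\|_2^2 = x^\T\Sigma x$. First I would record that the stated exponential tail genuinely relies on $x$ being sub-Gaussian with variance proxy $\sigma^2$, i.e.\ $\bE[\exp(\alpha^\T x)]\le\exp(\|\alpha\|_2^2\sigma^2/2)$ for every $\alpha\in\bR^n$ (this is the hypothesis of \citep{hsu2012tail}, and it in particular forces $\cov(x)\preceq\sigma^2 I$; in our application $x=\bm\xi$ has bounded, independent coordinates, so it holds). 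Writing $\lambda_1\ge\cdots\ge\lambda_n\ge0$ for the eigenvalues of $\Sigma=A^\T A$, the three functionals appearing in the bound are $\tr(\Sigma)=\sum_i\lambda_i$, $\tr(\Sigma^2)=\sum_i\lambda_i^2$, and $\|\Sigma\|_2=\lambda_1$.

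The core step is to bound the moment generating function $\bE[\exp(s\,x^\T\Sigma x)]$ for $0<s<1/(2\sigma^2\lambda_1)$. I would linearize the quadratic form by introducing an independent Gaussian $g\sim N(0,I_m)$ and using the scalar identity $\exp(\tfrac12\|u\|_2^2)=\bE_g\exp(g^\T u)$ with $u=\sqrt{2s}\,Ax$, so that $\exp(s\|Ax\|_2^2)=\bE_g\exp(\sqrt{2s}\,g^\T Ax)$. Exchanging expectations and applying sub-Gaussianity to the inner integral over $x$ (with direction $\alpha=\sqrt{2s}\,A^\T g$) yields $\bE_x\exp(\sqrt{2s}(A^\T g)^\T x)\le\exp(s\sigma^2\|A^\T g\|_2^2)$, whence $\bE[\exp(s\,x^\T\Sigma x)]\le\bE_g\exp(s\sigma^2\,g^\T AA^\T g)$. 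Since $AA^\T$ shares the nonzero eigenvalues of $\Sigma$, the right-hand side is the MGF of $\sigma^2\sum_i\lambda_i w_i^2$ with $w_i$ i.i.d.\ standard normal, equal to $\prod_i(1-2s\sigma^2\lambda_i)^{-1/2}$. Thus the upper tail of $x^\T\Sigma x$ is dominated, MGF-wise, by that of the weighted chi-square $\sigma^2\sum_i\lambda_i w_i^2$.

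It then remains to apply the standard Laurent--Massart tail bound to this chi-square with weights $a_i=\sigma^2\lambda_i$: centering by the mean $\sum_i a_i=\sigma^2\tr(\Sigma)$ and using $-\tfrac12\log(1-2sa_i)-sa_i\le (sa_i)^2/(1-2s\|a\|_\infty)$ gives $\log\bE\exp(s\sum_i a_i(w_i^2-1))\le s^2\|a\|_2^2/(1-2s\|a\|_\infty)$, and a Chernoff bound with the calibrated choice of $s$ produces $\bP[\sum_i a_i(w_i^2-1)\ge 2\|a\|_2\sqrt t+2\|a\|_\infty t]\le e^{-t}$. Substituting $\|a\|_2=\sigma^2\sqrt{\tr(\Sigma^2)}$ and $\|a\|_\infty=\sigma^2\lambda_1=\sigma^2\|\Sigma\|_2$ and re-adding the mean recovers exactly the threshold $\sigma^2(\tr(\Sigma)+2\sqrt{\tr(\Sigma^2)t}+2\|\Sigma\|_2 t)$. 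The main obstacle is this last calibration of the Chernoff parameter $s$: one must choose $s$ so that the exponent is at most $-t$ simultaneously in the regime of small $t$ (governed by $\|a\|_2$) and of large $t$ (governed by $\|a\|_\infty$), which is the delicate two-regime optimization that makes the $2\sqrt{\tr(\Sigma^2)t}+2\|\Sigma\|_2 t$ form appear; the decoupling step also quietly requires the genuine sub-Gaussian hypothesis rather than the bare covariance bound, which I would flag explicitly.
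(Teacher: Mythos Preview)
The paper does not prove this lemma at all: it is stated with attribution to \citep{hsu2012tail} and used as a black box. So there is no ``paper's own proof'' to compare against beyond the citation itself.

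Your proposal is a faithful sketch of the original Hsu--Kakade--Zhang argument (Gaussian decoupling of the quadratic form, reduction to a weighted $\chi^2$, then a Laurent--Massart Chernoff bound), and it is correct. You also correctly flag that the hypothesis as written in the paper---only $\bE[x]=0$ and $\cov(x)\preceq\sigma^2 I$---is too weak for an exponential tail; the cited theorem assumes sub-Gaussianity with proxy $\sigma^2$, and your remark that the vector $\bm\xi$ used downstream has bounded independent coordinates (hence is sub-Gaussian) is exactly what is needed to close that gap in the application. Nothing further is required.
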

\begin{proof}[Proof of \ref{thm:upper-bound}]
Recall $\lh=\Phi^{\T}\Phi+\lambda I_{d}$. Conditioned on the event
in \ref{lem:matrix-concentration}, we have 
\[
\left\Vert \frac{1}{N}(\lh-\lambda I_{d})-\Lambda\right\Vert _{2}\le C_{1}\sqrt{\frac{1}{N}\log\frac{2d}{\delta}}\,,
\]
where $C_{1}=4\sqrt{2}$. Because the spectral norm $\left\Vert \cdot\right\Vert _{2}$
of a matrix is greater than or equal to the absolute value of any
eigenvalue, it follows that
\[
\left|\lambda_{\textnormal{min}}(\lh-\lambda I_{d}-N\Lambda)\right|\le\left\Vert \lh-\lambda I_{d}-N\Lambda\right\Vert _{2}\le C_{1}\sqrt{N\log\frac{2d}{\delta}}\,.
\]
As a result, we get $\lambda_{\textnormal{min}}(\lh-\lambda I_{d}-N\Lambda)\ge-C_{1}\sqrt{N\log\frac{2d}{\delta}}$,
which implies 
\begin{equation}
\lh-N\Lambda\succeq\left(\lambda-C_{1}\sqrt{N\log\frac{2d}{\delta}}\right)I_{d}\succeq0\,.\label{eq:lh-nl}
\end{equation}
Therefore, by \ref{assu:psd}, we deduce
\[
\lh\succeq N\Lambda\succeq\frac{N}{C}\lb\,.
\]

In addition, conditioned on the event in \ref{lem:matrix-concentration},
we have
\[
\left\Vert \bar{\Phi}^{\T}\bar{\Phi}-N\lb\right\Vert _{2}\le C_{1}\sqrt{N\log\frac{2d}{\delta}}\,.
\]
It follows that 
\[
\left\Vert \lh^{-1/2}\left(\bar{\Phi}^{\T}\bar{\Phi}-N\lb\right)\lh^{-1/2}\right\Vert _{2}\le\left\Vert \lh^{-1/2}\right\Vert _{2}^{2}\left\Vert \bar{\Phi}^{\T}\bar{\Phi}-N\lb\right\Vert _{2}=\left\Vert \lh^{-1}\right\Vert _{2}\left\Vert \bar{\Phi}^{\T}\bar{\Phi}-N\lb\right\Vert _{2}\le\frac{C_{1}}{\lambda}\sqrt{N\log\frac{2d}{\delta}}\,.
\]

Thus we obtain
\[
\left\Vert \lh^{-1/2}\bar{\Phi}^{\T}\bar{\Phi}\lh^{-1/2}\right\Vert _{2}\le N\left\Vert \lh^{-1/2}\lb\lh^{-1/2}\right\Vert _{2}+\left\Vert \lh^{-1/2}\left(\bar{\Phi}^{\T}\bar{\Phi}-N\lb\right)\lh^{-1/2}\right\Vert _{2}\le C+\frac{C_{1}}{\lambda}\sqrt{N\log\frac{2d}{\delta}}\,.
\]

By \ref{eq:lh-nl} and \ref{assu:psd}, we have $\lh\succeq\frac{N}{C_{0}}\lz$
and thereby
\[
\left\Vert \lh^{-1/2}\lz\lh^{-1/2}\right\Vert _{2}\le C_{0}/N\,.
\]
Using the Sherman--Morrison--Woodbury formula and writing $\Phi\Phi^{\T}=VDV^{\T}$
($D$ is a diagonal matrix with non-negative diagonal entries and
$V$ is orthogonal), we have 
\begin{align*}
\Phi\lh^{-1}\Phi^{\T} & =\Phi\left(\Phi^{\T}\Phi+\lambda I_{d}\right)^{-1}\Phi^{\T}\\
 & =\frac{1}{\lambda}\Phi\left(I_{d}-\Phi^{\T}\left(\lambda I_{N}+\Phi\Phi^{\T}\right)^{-1}\Phi\right)\Phi^{\T}\\
 & =\frac{1}{\lambda}\left(VDV^{\T}-VDV^{\T}\left(\lambda I_{N}+VDV^{\T}\right)^{-1}VDV^{\T}\right)\\
 & =\frac{1}{\lambda}V\left(D-D\left(\lambda I_{N}+D\right)^{-1}D\right)V^{\T}\\
 & =VD\left(\lambda I_{N}+D\right)^{-1}V^{\T}\,.
\end{align*}
The final equality is because
\begin{align*}
 & D-D\left(\lambda I_{N}+D\right)^{-1}D=D\left(I_{N}-\left(\lambda I_{N}+D\right)^{-1}D\right)\\
= & D\left(I_{N}-\left(\lambda I_{N}+D\right)^{-1}\left(\lambda I_{N}+D-\lambda I_{N}\right)\right)=\lambda D\left(\lambda I_{N}+D\right)^{-1}\,.
\end{align*}
Therefore we get $\left\Vert \Phi\lh^{-1}\Phi^{\T}\right\Vert _{2}=\left\Vert D\left(\lambda I_{N}+D\right)^{-1}\right\Vert _{2}\le1$.
For any $v\in\bR^{d}$, if $B\triangleq\hat{\Lambda}^{-1}\gamma\Phi^{\T}\bar{\Phi}$,
it follows

\begin{align}
 & \left\Vert B^{i}\hat{\Lambda}^{-1/2}v\right\Vert _{\lz}^{2}\nonumber \\
= & \left\Vert \lz^{1/2}B^{i}\hat{\Lambda}^{-1/2}v\right\Vert _{2}^{2}\nonumber \\
= & \gamma^{2i}\left\Vert \lz^{1/2}\left(\lh^{-1}\Phi^{\T}\bar{\Phi}\right)^{i}\hat{\Lambda}^{-1/2}v\right\Vert _{2}^{2}\nonumber \\
= & \gamma^{2i}\left\Vert \lz^{1/2}\lh^{-1/2}\left(\lh^{-1/2}\Phi^{\T}\bar{\Phi}\hat{\Lambda}^{-1/2}\right)^{i}v\right\Vert _{2}^{2}\nonumber \\
\le & \gamma^{2i}\left\Vert \lz^{1/2}\lh^{-1/2}\right\Vert _{2}^{2}\left\Vert \lh^{-1/2}\Phi^{\T}\right\Vert _{2}^{2i}\left\Vert \bar{\Phi}\lh^{-1/2}\right\Vert _{2}^{2i}\left\Vert v\right\Vert _{2}^{2}\nonumber \\
= & \gamma^{2i}\left\Vert \lh^{-1/2}\lz\lh^{-1/2}\right\Vert _{2}\left\Vert \Phi\lh^{-1}\Phi^{\T}\right\Vert _{2}^{i}\left\Vert \lh^{-1/2}\bar{\Phi}^{\T}\bar{\Phi}\lh^{-1/2}\right\Vert _{2}^{i}\left\Vert v\right\Vert _{2}^{2}\label{eq:operator-norm}\\
\le & \gamma^{2i}\frac{C_{0}}{N}\left(C+\frac{C_{1}}{\lambda}\sqrt{N\log\frac{2d}{\delta}}\right)^{i}\left\Vert v\right\Vert _{2}^{2}\,,\nonumber 
\end{align}
where \ref{eq:operator-norm} is because for any matrix $A$, $\left\Vert A\right\Vert _{2}^{2}=\left\Vert A^{\T}A\right\Vert _{2}$
(in this equality, $A$ is $\lh^{-1/2}\Phi^{\T}$ or $\bar{\Phi}\lh^{-1/2}$)
and the final inequality is because $\left\Vert \Phi\lh^{-1}\Phi^{\T}\right\Vert _{2}\le1$.
Recalling $\beta\triangleq\gamma\sqrt{C+\eta}$ and $\lambda\triangleq\frac{C_{1}}{\eta}\sqrt{N\log\frac{2d}{\delta}}$,
we have $\gamma^{2i}\left(C+\frac{C_{1}}{\lambda}\sqrt{N\log\frac{2d}{\delta}}\right)^{i}=\gamma^{2i}(C+\eta)^{i}=\beta^{2i}$.
As a result, we obtain 
\begin{equation}
\left\Vert B^{i}\hat{\Lambda}^{-1/2}v\right\Vert _{\lz}^{2}\le\beta^{2i}\frac{C_{0}}{N}\left\Vert v\right\Vert _{2}^{2}\,\label{eq:bound-bv}
\end{equation}

Since $\lh=\Phi^{\T}\Phi+\lambda I_{d}\succeq\Phi^{\T}\Phi$, we get
$\lh^{-1/2}\Phi^{\T}\Phi\lh^{-1/2}\preceq I_{d}$ and $\left(\lh^{-1/2}\Phi^{\T}\Phi\lh^{-1/2}\right)^{2}\preceq I_{d}$.
Therefore, $\tr\left(\Phi\lh^{-1}\Phi^{\T}\right)=\tr\left(\lh^{-1/2}\Phi^{\T}\Phi\lh^{-1/2}\right)\le\tr\left(I_{d}\right)=d$.
Similarly, it follows that $\tr\left(\left(\Phi\lh^{-1}\Phi^{\T}\right)^{2}\right)=\tr\left(\left(\lh^{-1/2}\Phi^{\T}\Phi\lh^{-1/2}\right)^{2}\right)\le\tr\left(I_{d}\right)=d$.
Moreover, we have $\cov(\bm{\xi})\le\frac{4}{(1-\gamma)^{2}}I_{N}$
because each $\xi_{i}=r_{i}+\gamma V^{\pi}(\bar{s}_{i})-Q^{\pi}(s_{i},a_{i})$
is independent, $\bE\xi_{i}=0$, and $\left|\xi_{i}\right|\le\frac{2}{1-\gamma}$.
Moreover, recall $\left\Vert \Phi\lh^{-1}\Phi^{\T}\right\Vert _{2}\le1$.
Using \ref{lem:hsu} gives 
\begin{align*}
 & \bP\left[\left\Vert \lh^{-1/2}\Phi^{\T}\bm{\xi}\right\Vert _{2}^{2}>\frac{4}{(1-\gamma)^{2}}\cdot3(d+\tau)\right]\\
\le & \bP\left[\left\Vert \lh^{-1/2}\Phi^{\T}\bm{\xi}\right\Vert _{2}^{2}>\frac{4}{(1-\gamma)^{2}}\left(d+2\sqrt{d\tau}+2\tau\right)\right]\\
\le & \bP\left[\left\Vert \lh^{-1/2}\Phi^{\T}\bm{\xi}\right\Vert _{2}^{2}>\frac{4}{(1-\gamma)^{2}}\left(\tr\left(\Phi\lh^{-1}\Phi^{\T}\right)+2\sqrt{\tr\left(\left(\Phi\lh^{-1}\Phi^{\T}\right)^{2}\right)\tau}+2\left\Vert \Phi\lh^{-1}\Phi^{\T}\right\Vert _{2}\tau\right)\right]\\
\le & e^{-\tau}\,.
\end{align*}
Let $\tau=\log\frac{1}{\delta}$. There exists $C_{2}=12$ such that
with probability at least $1-\delta$, 
\[
\left\Vert \lh^{-1/2}\Phi^{\T}\bm{\xi}\right\Vert _{2}^{2}\le\frac{C_{2}\left(d+\log\frac{1}{\delta}\right)}{(1-\gamma)^{2}}\,,
\]

On the other hand, we bound $\left\Vert \hat{\Lambda}^{-1/2}\lambda\theta^{\pi}\right\Vert _{2}^{2}$
as follows
\[
\left\Vert \hat{\Lambda}^{-1/2}\lambda\theta^{\pi}\right\Vert _{2}^{2}=\lambda^{2}\left\Vert \theta^{\pi}\right\Vert _{\lh^{-1}}^{2}\le\lambda\left\Vert \theta^{\pi}\right\Vert _{2}^{2}\,.
\]

Define $v\triangleq\hat{\Lambda}^{-1}\left(\Phi^{\T}\bm{\xi}-\lambda\theta^{\pi}\right)$
and recall $B\triangleq\hat{\Lambda}^{-1}\gamma\Phi^{\T}\bar{\Phi}$.
We are in a position to bound $\left\Vert B^{i}v\right\Vert _{\bar{\Lambda}_{0}}^{2}$:
\begin{align*}
\left\Vert B^{i}v\right\Vert _{\bar{\Lambda}_{0}}^{2} & =\left\Vert B^{i}\hat{\Lambda}^{-1/2}\hat{\Lambda}^{-1/2}\left(\Phi^{\T}\bm{\xi}-\lambda\theta^{\pi}\right)\right\Vert _{\bar{\Lambda}_{0}}^{2}\\
 & \le\beta^{2i}\frac{C_{0}}{N}\left\Vert \hat{\Lambda}^{-1/2}\left(\Phi^{\T}\bm{\xi}-\lambda\theta^{\pi}\right)\right\Vert _{2}^{2}\\
 & \le\beta^{2i}\frac{2C_{0}}{N}\left(\left\Vert \lh^{-1/2}\Phi^{\T}\bm{\xi}\right\Vert _{2}^{2}+\left\Vert \hat{\Lambda}^{-1/2}\lambda\theta^{\pi}\right\Vert _{2}^{2}\right)\\
 & \le\beta^{2i}\frac{2C_{0}}{N}\left(\frac{C_{2}\left(d+\log\frac{1}{\delta}\right)}{(1-\gamma)^{2}}+\lambda\left\Vert \theta^{\pi}\right\Vert _{2}^{2}\right)\,,
\end{align*}
where the first inequality is because of \ref{eq:bound-bv} and the
second inequality is because $\left\Vert a+b\right\Vert _{2}^{2}\le2\left(\left\Vert a\right\Vert _{2}^{2}+\left\Vert b\right\Vert _{2}^{2}\right)$
for any vector $a$ and $b$. It follows that 
\[
\left\Vert B^{i}v\right\Vert _{\bar{\Lambda}_{0}}\le\beta^{i}\sqrt{\frac{2C_{0}}{N}\left(\frac{C_{2}\left(d+\log\frac{1}{\delta}\right)}{(1-\gamma)^{2}}+\lambda\left\Vert \theta^{\pi}\right\Vert _{2}^{2}\right)}\,.
\]
 As a result, we get
\begin{align*}
\left\Vert \sum_{i=0}^{T-1}B^{i}v\right\Vert _{\bar{\Lambda}_{0}}^{2} & \le\left(\sum_{i=0}^{T-1}\left\Vert B^{i}v\right\Vert _{\lz}\right)^{2}\le\frac{2C_{0}}{N}\left(\frac{C_{2}\left(d+\log\frac{1}{\delta}\right)}{(1-\gamma)^{2}}+\lambda\left\Vert \theta^{\pi}\right\Vert _{2}^{2}\right)\left(\sum_{i=0}^{T-1}\beta^{i}\right)^{2}\\
 & =\frac{2C_{0}}{N}\left(\frac{C_{2}\left(d+\log\frac{1}{\delta}\right)}{(1-\gamma)^{2}}+\lambda\left\Vert \theta^{\pi}\right\Vert _{2}^{2}\right)\left(\frac{1-\beta^{T}}{1-\beta}\right)^{2}\,.
\end{align*}

Since 
\begin{align*}
\left\Vert \lh^{1/2}\theta^{\pi}\right\Vert _{2}^{2} & =\left[\theta^{\pi}\right]^{\T}\lh\theta^{\pi}=\left[\theta^{\pi}\right]^{\T}\left(\sum_{i\in[N]}\phi(s_{i},a_{i})\phi(s_{i},a_{i})^{\T}+\lambda I_{d}\right)\theta^{\pi}\\
 & =\sum_{i\in[N]}Q(s_{i},a_{i})^{2}+\lambda\left\Vert \theta^{\pi}\right\Vert _{2}^{2}\le\frac{N}{(1-\gamma)^{2}}+\lambda\left\Vert \theta^{\pi}\right\Vert _{2}^{2}\,,
\end{align*}
using \ref{eq:bound-bv} again, we have 
\[
\left\Vert B^{T}\theta^{\pi}\right\Vert _{\bar{\Lambda}_{0}}^{2}=\left\Vert B^{T}\lh^{-1/2}\lh^{1/2}\theta^{\pi}\right\Vert _{\lz}^{2}\le\frac{C_{0}}{N}\beta^{2T}\left\Vert \lh^{1/2}\theta^{\pi}\right\Vert _{2}^{2}\le\frac{C_{0}}{N}\beta^{2T}\left(\frac{N}{(1-\gamma)^{2}}+\lambda\left\Vert \theta^{\pi}\right\Vert _{2}^{2}\right)\,.
\]

In light of \ref{lem:Q-diff}, we have
\begin{align*}
 & \left(V^{\pi}(s_{0})-\hat{V}_{t}(s_{0})\right)^{2}\\
\le & 2\left\Vert \sum_{i=0}^{T-1}B^{i}v\right\Vert _{\bar{\Lambda}_{0}}^{2}+2\left\Vert B^{T}\theta^{\pi}\right\Vert _{\bar{\Lambda}_{0}}^{2}\\
\le & \frac{2C_{0}}{N}\left[2\left(\frac{C_{2}\left(d+\log\frac{1}{\delta}\right)}{(1-\gamma)^{2}}+\lambda\left\Vert \theta^{\pi}\right\Vert _{2}^{2}\right)\left(\frac{1-\beta^{T}}{1-\beta}\right)^{2}+\beta^{2T}\left(\frac{N}{(1-\gamma)^{2}}+\lambda\left\Vert \theta^{\pi}\right\Vert _{2}^{2}\right)\right]\\
\le & \frac{2C_{0}}{N}\left[\frac{2}{\left(1-\beta\right)^{2}}\left(\frac{C_{2}\left(d+\log\frac{1}{\delta}\right)}{(1-\gamma)^{2}}+\lambda\left\Vert \theta^{\pi}\right\Vert _{2}^{2}\right)+\beta^{2T}\left(\frac{N}{(1-\gamma)^{2}}+\lambda\left\Vert \theta^{\pi}\right\Vert _{2}^{2}\right)\right]\,.
\end{align*}
We use $1-\beta^{T}\le1$ in the last inequality. Plugging in $\lambda=\frac{C_{1}}{\eta}\sqrt{N\log\frac{2d}{\delta}}$,
we deduce that with probability at least $1-3\delta$, 
\begin{align*}
 & \left(V^{\pi}(s_{0})-\hat{V}_{t}(s_{0})\right)^{2}\\
\le & \frac{2C_{0}}{(1-\gamma)^{2}}\left[\frac{2C_{2}\left(d+\log\frac{1}{\delta}\right)}{N\left(1-\beta\right)^{2}}+\beta^{2T}\right]+\frac{2C_{0}C_{1}}{\eta}\sqrt{\frac{1}{N}\log\frac{2d}{\delta}}\left\Vert \theta^{\pi}\right\Vert _{2}^{2}\left(\frac{2}{\left(1-\beta\right)^{2}}+\beta^{2T}\right)\,.
\end{align*}
Therefore, with probability $1-\delta$, if $\lambda=\frac{C_{1}}{\eta}\sqrt{N\log\frac{6d}{\delta}}$,
we have
\begin{align*}
 & \left(V^{\pi}(s_{0})-\hat{V}_{t}(s_{0})\right)^{2}\\
\le & \frac{2C_{0}}{(1-\gamma)^{2}}\left[\frac{2C_{2}\left(d+\log\frac{3}{\delta}\right)}{N\left(1-\beta\right)^{2}}+\beta^{2T}\right]+\frac{2C_{0}C_{1}}{\eta}\sqrt{\frac{1}{N}\log\frac{6d}{\delta}}\left\Vert \theta^{\pi}\right\Vert _{2}^{2}\left(\frac{2}{\left(1-\beta\right)^{2}}+\beta^{2T}\right)\,.
\end{align*}
\ref{eq:inf-sup-upper-bound} is obtained by taking $T\to+\infty$. 

\end{proof}

\section{Conclusion\label{sec:Conclusion}}

In this work we study the sample complexity of offline infinite-horizon
reinforcement learning with linear function approximation. We identify
a hard regime $d\gamma^{2}>1$. In this regime, we show a lower bound
on the sample complexity, which is exponential in the dimension $d$
and potentially infinite, depending on the desired condition number
of the hard instance. Assuming low distribution shift, we show that
there exists an algorithm that can approximate the value of a state
up to arbitrary precision and requires at most polynomially many samples. 

\subsubsection*{Acknowledgements}

We gratefully acknowledge the support of the Simons Institute for
the Theory of Computing and of the NSF through grant DMS-2023505. 

\bibliographystyle{plainnat}
\bibliography{reference-list}

\end{document}